\theoremstyle{remark}
\newtheorem{remark}{Remark}
\def\th@remark{%
  \normalfont 
  \thm@headfont{\bfseries\itshape}
}
\theoremstyle{definition}
\newtheorem{assumption}{Assumption}
\theoremstyle{definition}
\theoremstyle{plain} 
\newtheorem{proposition}{Proposition}
\begin{document}

\title{Canonical Policy: Learning Canonical 3D Representation for \added{$\mathrm{SE}(3)$-}Equivariant Policy}

\author{Zhiyuan Zhang$^{*}$, Zhengtong Xu$^{*}$, Jai Nanda Lakamsani, Yu She$^{\dagger}$
\thanks{$^{*}$ Equal Contribution, $^{\dagger}$ Corresponding Author.}
\thanks{Zhiyuan, Zhengtong, Jai, and Yu are with Purdue University, West Lafayette, USA  
{\tt\footnotesize zhan5570, xu1703, jlakamsa, shey@purdue.edu}}%
}

\maketitle

\begin{abstract}
Visual Imitation learning has achieved remarkable progress in robotic manipulation, yet generalization to unseen objects, scene layouts, and camera viewpoints remains a key challenge. Recent advances address this by using 3D point clouds, which provide geometry-aware, appearance-invariant representations, and by incorporating equivariance into policy architectures to exploit spatial symmetries. However, existing equivariant approaches often lack interpretability and rigor due to unstructured integration of equivariant components. We introduce canonical policy, a principled framework for 3D equivariant imitation learning that unifies 3D point cloud observations under a canonical representation. We first establish a theory of 3D canonical representations, enabling equivariant observation-to-action mappings by grouping both \added{seen} and \added{novel} point clouds to a canonical representation. We then propose a flexible policy learning pipeline that leverages geometric symmetries from canonical representation and the expressiveness of modern generative models. We validate canonical policy on 12 diverse simulated tasks and 4 real-world manipulation tasks across 16 configurations, involving variations in object color, shape, camera viewpoint, and robot platform.
Compared to state-of-the-art imitation learning policies, canonical policy achieves an average improvement of 18.0\% in simulation and \added{39.7}\% in real-world experiments, demonstrating superior generalization capability and sample efficiency.
For more details, please refer to the project website: \url{https://zhangzhiyuanzhang.github.io/cp-website/}.

\end{abstract}

\begin{IEEEkeywords}
Equivariance, imitation learning, 3D vision.
\end{IEEEkeywords}

\section{Introduction}
Imitation learning has made remarkable progress in robotic manipulation in recent years \cite{diffusion_policy,zhao2023learning,DP3,o2024open,wang2024gendp}. However, generalization and sample efficiency remain key challenges. In particular, visual imitation learning policies \cite{diffusion_policy,zhao2023learning} often struggle to generalize beyond the fixed dataset of human demonstrations on which they are trained. Unseen variations in object types, scene configurations, geometric layouts, or camera viewpoints can significantly degrade policy performance.

To improve generalization in visual imitation, recent research has explored the use of 3D point clouds as input observations \cite{DP3,iDP3,wang2024gendp,huang20243d}. Point clouds encode the geometric structure of the environment and are invariant to visual distractors such as background textures or object appearances. As a result, policies trained on 3D representations can more effectively learn the mapping between geometry of observations and robot actions, thereby improving both generalization and sample efficiency.

Complementary to this, another line of work enhances generalization and sample efficiency to spatial configurations by embedding equivariance into the network architecture \cite{Equibot,equidiff,EquiAct,gao2024riemann}. Since robotic manipulation takes place in 3D Euclidean space and involves tasks that are often equivariant to rotations and translations, incorporating symmetry priors into the policy architecture can significantly improve both sample efficiency and generalization performance.

However, existing approaches that aim to achieve equivariant policy learning from 3D point clouds or 2D images often rely on re-designing the entire network using equivariant neural modules \cite{Equibot,equidiff,EquiAct}. While this strategy introduces a certain degree of equivariance into the policy, it often lacks rigor, interpretability, and effectiveness in achieving equivariant learning. This is primarily because neural networks operate as black-box function approximators, and policy learning pipelines are typically long and complex, spanning from perception encoders to intermediate conditioning modules, and finally to generative action heads. \added{This makes it difficult to} ensure that equivariance is preserved in a principled and verifiable manner throughout the entire pipeline. As a result, the geometric symmetry of the learned policy and its sample efficiency are often degraded.

To enhance policy generalization and sample efficiency, we propose the canonical policy, which integrates point cloud representations with equivariance in a rigorous form. Our contributions can be summarized as follows:

1. We propose a theory of 3D canonical representations for imitation learning. Through rigorous theoretical derivations, we demonstrate how both \added{seen} and \added{novel} point clouds can be efficiently grouped under a unified canonical \added{representation}, and how this representation enables the construction of equivariant mappings from observations to actions. With this canonical representation, we show that 3D equivariant manipulation policies can be learned end-to-end from demonstrations, achieving strong spatial generalization in manipulation tasks.

2. Through a series of careful designs and theoretical analyses, we propose an imitation learning pipeline tailored for 3D canonical representations. This pipeline can flexibly integrate various forms of modern generative modeling, such as diffusion models and flow matching, enabling it to fully exploit the geometric symmetries introduced by canonical representations while leveraging the strong representational capacity of generative models for action distributions. 

3. We validate the effectiveness of canonical policy through extensive experiments. On a simulated benchmark of 12 diverse manipulation tasks, we compare against state-of-the-art 3D policy learning policies, including 3D diffusion policy (DP3)\cite{DP3}, improved 3D diffusion policy (iDP3)\cite{iDP3}, and EquiBot \cite{Equibot}. Canonical policy consistently outperforms all baselines, achieving an average task success improvement of 18\%.
We further evaluate canonical policy on four real-world manipulation tasks across 16 configurations involving variations in object color, shape, camera viewpoint, and robot platform. Canonical policy consistently outperforms state-of-the-art methods, including diffusion policy (DP)~\cite{diffusion_policy}, equivariant diffusion policy (EquiDiff)~\cite{equidiff}, and DP3~\cite{DP3}, achieving an average improvement of \added{39.7}\%.

\section{Related Works}

\subsection{Supervised Policy Learning}
Recent advances in imitation learning have enabled robots to perform complex, long-horizon manipulation tasks using supervised training frameworks \cite{diffusion_policy,chi2024universal,zhao2023learning,fu2024mobile}. This progress spans diverse areas, including scalable data collection pipelines \cite{zhao2023learning,fu2024mobile,wang2024dexcap}, the exploration of expressive policy architectures such as diffusion \cite{diffusion_policy}, flow matching \cite{flow_matching}, and VQVAE \cite{lee2024behavior}, as well as large-scale models driven by vision-language foundations or expanded datasets \cite{black2410pi0,team2023octo,zhao2024aloha}.

Most existing work has leveraged visual modalities such as RGB images \cite{diffusion_policy,chi2024universal,zhao2023learning} or integrated tactile signals  \cite{wang2024poco,yu2023mimictouch,xu2025unit,xue2025reactive,guzey2023see,lin2024learning,bhirangi2024anyskin} to enhance physical interaction modeling.
To improve robustness and generalization from limited demonstrations, recent work has explored structured data augmentation for visual policies, including semantic augmentations \cite{bharadhwaj2023roboagent} and embodiment- or viewpoint-level transformations \cite{chen2024roviaug}.
Meanwhile, 3D geometric representations such as point clouds \cite{DP3,iDP3,xue2025demogen,huang20243d,Equibot,yang2025fp33dfoundationpolicy,wang2024gendp} have shown particular promise for generalization. By capturing object shape while abstracting away distractors like color, background, and texture, 3D modalities allow policies to focus on task-relevant geometry. This enables more data-efficient learning and better transfer to unseen objects and environments, making 3D a particularly effective modality in scenarios that demand generalizable manipulation behavior. 

Canonical policy achieves policy-level equivariance by learning a 3D equivariant representation. Leveraging the inherent generalization capability of point clouds, particularly their ability to focus on geometry rather than appearance, canonical policy enables strong generalization to novel objects and scenes from only a few demonstrations. This spatial generalization further allows effective robotic manipulation across diverse tasks, objects, and environments, all under a single fixed-camera setup.

\subsection{Equivariance in Robot Learning}
Robots operate in a three-dimensional Euclidean space, where manipulation tasks naturally exhibit geometric symmetries such as rotations. A growing body of work~\cite{simeonov2023se,wang22onrobot,huang2023leveraging,pan2023tax,liu2023continual,jia2023seil,kim2023se,kohler2023symmetric,nguyen2024symmetry,eisner2024deep,hu2025push,zhao2025hierarchical,equidiff,actionflow,ipa} demonstrates that incorporating symmetry priors into policy learning can significantly enhance sample efficiency and overall performance. 

Moreover, point clouds are particularly well-suited as policy inputs for leveraging geometric symmetries, as they directly encode spatial structures and naturally transform under SE(3) operations. This alignment between input representation and task symmetry simplifies the design of equivariant policies and promotes better generalization across diverse object poses and environments. Recent works~\cite{EquiAct,Equibot,gao2024riemann,huang2024matchpolicysimplepipeline} have explored leveraging the natural alignment between point cloud representations and geometric symmetries to design equivariant policies for improved sample efficiency and generalization.

Our paper investigates the geometric symmetries of policies with point cloud input. By learning a canonical representation for 3D point cloud, canonical policy naturally integrates with widely used generative model-based policy heads, such as flow matching and diffusion models. This design inherits the powerful representation capabilities of generative models, the strong generalization ability of point clouds, and the sample efficiency benefits brought by geometric symmetries.

\section{Preliminaries}
This section provides the necessary background for our approach, which is situated in the context of visual-conditioned policy learning. We first specify the problem setup and formalize the input–output structure of the learning objective. We then introduce equivariance, which plays a central role in improving generalization and sample efficiency.
\added{Next, we establish the theoretical foundation of our canonical policy by introducing canonical representations, which map all samples within the same equivariant group to a unified representation. Finally, we contrast canonicalization with $\mathrm{SE}(3)$ data augmentation and clarify the generalization benefits of canonical representations.}

\subsection{Problem Setup}
We consider the problem of learning robotic control policies via behavior cloning. The goal is to train a policy that maps a sequence of observations to a corresponding sequence of actions, such that the resulting behavior closely imitates that of a demonstrated expert.

Let \( m \) denote the number of past observations and \( n \) the number of future actions. At each timestep \( t \), the input to the policy is an observation sequence
$
\mathcal{O} = \{\mathbf{o}_{t-m+1}, \ldots, \mathbf{o}_t\},
$
and the output is a predicted action sequence
$
\mathcal{A} = \{\mathbf{a}_t, \ldots, \mathbf{a}_{t+n-1}\}.
$
Each observation may include both visual inputs (e.g., images, voxels, or point clouds) and proprioceptive information (e.g., gripper pose or joint angles).

\subsection{Equivariance in Policy Learning}
Equivariance is a desirable property in policy learning, especially for robotic manipulation tasks that involve geometric transformations. In manipulation, the same task (e.g., grasping, placing) often needs to be performed under different object poses or camera viewpoints. If the policy can inherently adapt to such variations through equivariance, it reduces the need for extensive data augmentation or retraining. Prior work~\cite{simeonov2023se,huang2023leveraging,pan2023tax,Equibot,gao2024riemann} shows that incorporating symmetry priors significantly improves generalization and sample efficiency in robotic learning.
Formally, a function $f$ is equivariant to a transformation $\mathbf{h} \in \mathrm{SE}(3)$ if
\begin{equation*}
f(\mathbf{h} \cdot \mathbf{x}) = \mathbf{h} \cdot f(\mathbf{x}),
\end{equation*}
where $\mathbf{x}$ denotes the input.

In the context of policy learning, a policy $\pi$ maps a sequence of past observations $\mathcal{O}$ to a sequence of future actions $\mathcal{A}$.
We expect that if a transformation $\mathbf{h}$ is applied to the observation $\mathbf{o}\in \mathcal{O}$, the policy will produce a correspondingly transformed action sequence:
\begin{equation*}
\pi(\mathbf{h} \cdot \mathbf{o}) = \mathbf{h} \cdot \pi(\mathbf{o}).
\end{equation*}
This equivariance property allows the policy to generalize across variations in object poses, camera viewpoints, and scene configurations, as identical behaviors can be recovered from transformed observations. It improves sample efficiency and enhances robustness, as fewer demonstrations are required to cover task variations that exhibit underlying spatial symmetry.

\subsection{\added{Theory of Canonical Representations}}
\label{sec:theory}
\added{
This section establishes the theoretical foundation for canonical representations of 3D observations in visual-conditioned policy learning. Here, a policy is trained to predict actions from sequences of point cloud observations, typically collected from expert demonstrations of a specific manipulation task. Although these observations may vary in pose due to changes in viewpoint or object configuration, they often share the same underlying geometry. We aim to group such pose-varying observations into equivariant subsets that can be mapped to a common canonical form. This structure is formalized by the group-based assumption introduced below.}

\begin{figure}[t]
    \centering
    \includegraphics[width=0.9\linewidth]{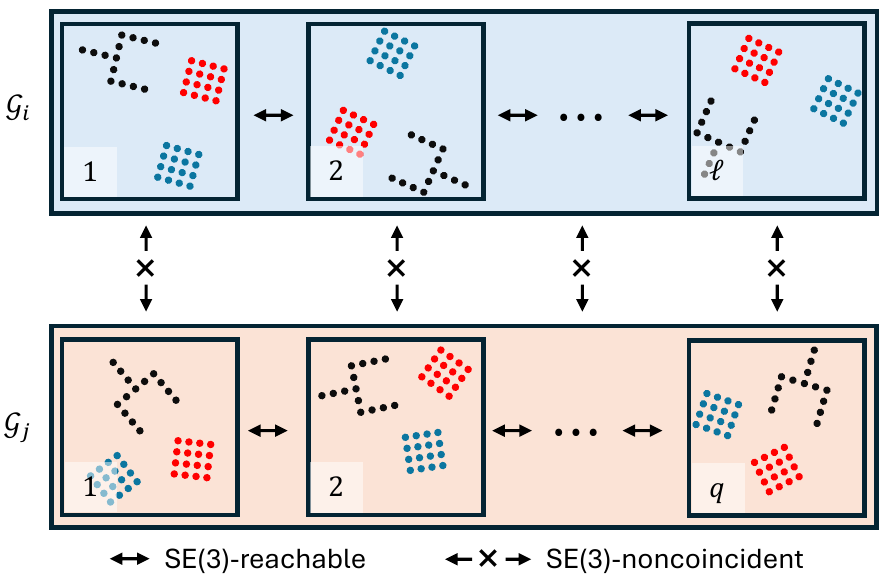}
    \caption{\added{Illustraion of two distinct equivariant groups $\mathcal{G}_i$ and $\mathcal{G}_j$. Samples within the same group can be aligned via an $\mathrm{SE}$(3) transformation, while samples from different groups cannot be made to coincide.}}
    \label{fig:g1g2}
\end{figure}

\begin{figure}[t]
    \centering
    \includegraphics[width=1.0\linewidth]{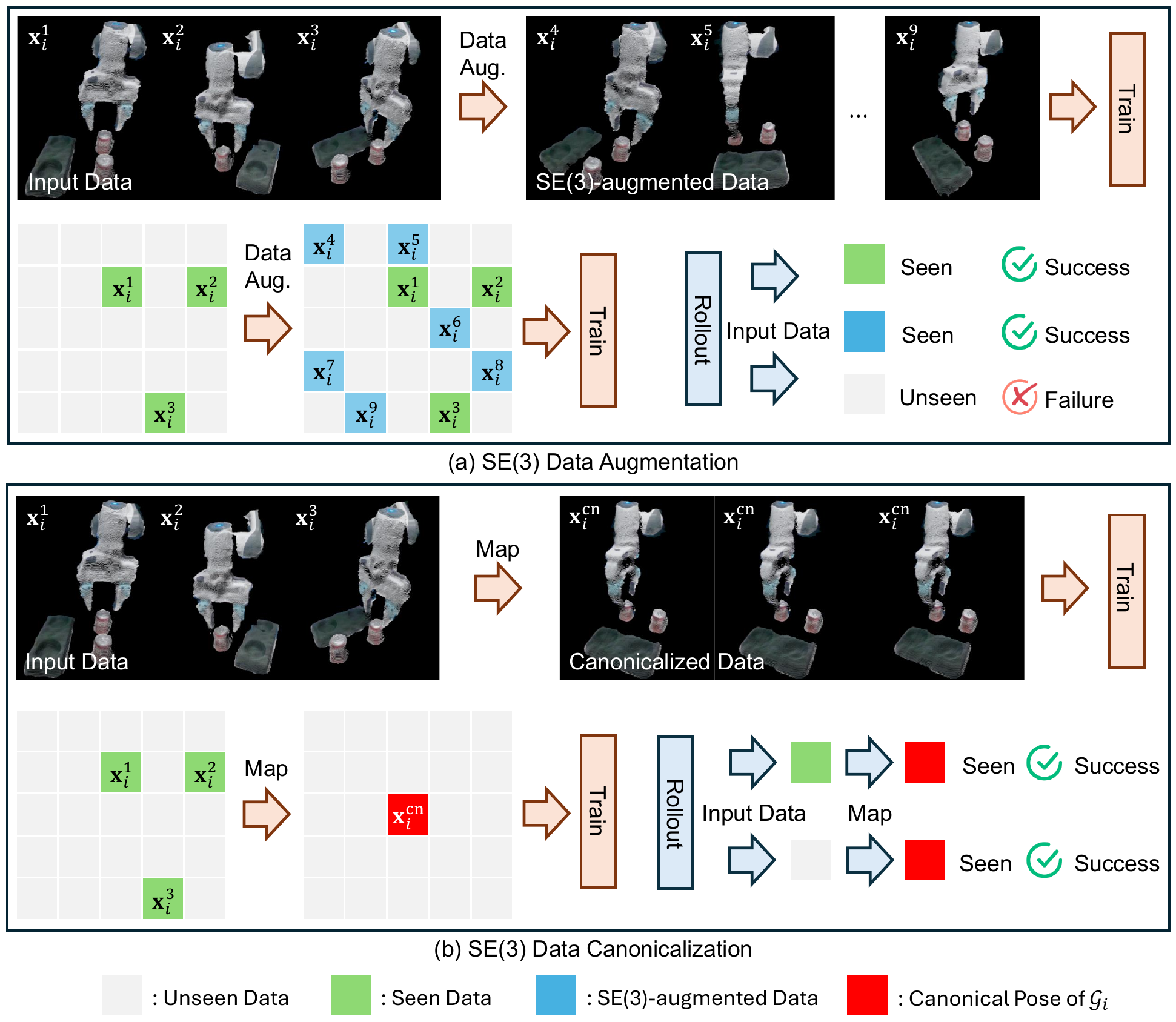}
    \caption{\added{Comparison of $\mathrm{SE}(3)$ augmentation and canonicalization: (a) $\mathrm{SE}(3)$ data augmentation; (b) $\mathrm{SE}(3)$ data canonicalization}}
    \label{fig:data_aug}
\end{figure}

\added{
\begin{assumption}
\label{asp:group}
A set of 3D point clouds can be partitioned into a finite number of mutually exclusive subsets:
$
\mathcal{D} = \{\mathcal{G}_1, \mathcal{G}_2, \cdots,\mathcal{G}_p\},
$
where $p$ is the total number of groups,
$\mathcal{G}_i=\{\mathbf{x}_i^{1},\mathbf{x}_i^{2},\cdots \mathbf{x}_i^{\ell}\}$ denotes the $i$-th group,
and $\mathbf{x}_i^{\ell}\in\mathbb{R}^{3\times N}$ is the $\ell$-th point cloud sample in the group $\mathcal{G}_i$, consisting of $N$ points.
For any two distinct groups \(\mathcal{G}_i\) and \(\mathcal{G}_j\), it holds that
\begin{equation*}
\forall \mathbf{x}_i^{\ell}\in\mathcal{G}_i,\; \mathbf{x}_j^q\in\mathcal{G}_j,\;\forall \mathbf{R}\in \mathrm{SO}(3), \ \mathbf{b} \in \mathbb{R}^{3 \times 1}, \mathbf{1}\in\mathbb{R}^{1\times N}
\end{equation*}
\begin{equation}
\label{group_diff}
\mathbf{x}_i^{\ell} \neq \mathbf{R} \mathbf{x}_j^q + \mathbf{b}\mathbf{1},
\end{equation}
where \( \mathbf{R} \), \( \mathbf{b} \), and $\mathbf{1}$ denote rotation matrix, translation vector, and all-ones vector, respectively.
For notational simplicity, we use $\mathbf{x},\,\mathbf{y}$ to denote generic point cloud samples and omit group and sample indices.
We also write $\mathbf{b}\mathbf{1}$ simply as $\mathbf{b}$ to represent the global translation.
Moreover, within each group, we have
\begin{equation}
\label{within_group}
\forall \mathbf{x}, \mathbf{y}\in \mathcal{G}_i,\ \exists \mathbf{R}, \mathbf{b}, \quad \mathbf{y} = \mathbf{R}\mathbf{x} + \mathbf{b}.
\end{equation}
\end{assumption}
}

\added{
The illustration of the groups $\mathcal{G}_i$ and $\mathcal{G}_j$ is shown in Fig.~\ref{fig:g1g2}. 
For any point cloud samples within the same group, there always exists an $\mathrm{SE}(3)$ transformation that can map one to the other. 
However, for point cloud samples belonging to two distinct groups $\mathcal{G}_i$ and $\mathcal{G}_j$, no $\mathrm{SE}(3)$ transformations can make them coincide.
}

\added{
\begin{remark}
\label{rmk:eq}
Under Assumption~\ref{asp:group}, we can see that \( \mathcal{G}_i\) forms an \emph{equivariant group}, such that for any $\mathrm{SE}$(3)-equivariant function \( f \),  the following holds:
\begin{equation*}
\label{equivariant_group}
\forall \mathbf{x}, \mathbf{y} \in \mathcal{G}_i,\ \exists \mathbf{R}, \mathbf{b},\quad f(\mathbf{y}) = f(\mathbf{R}\mathbf{x} + \mathbf{b}) = \mathbf{R}f(\mathbf{x}) + \mathbf{b}.
\end{equation*}
\end{remark}
}

\added{
In the context of visual-conditioned policy learning, it is reasonable to assume that new observations from the same task can be related to samples in $\mathcal{D}$ through $\mathrm{SE}$(3) transformations, which is summarized below.}

\added{
\begin{assumption}
\label{asp:ood}
For a set of point clouds $\mathcal{D}$ collected from a demonstrated task, any point cloud observation from the same task but outside $\mathcal{D}$  can be mapped to a point cloud within $\mathcal{D}$ through an $\mathrm{SE}$(3) transformation.
\end{assumption}
}

\added{
In practice, the collected dataset $\mathcal{D}$ is typically diverse enough to cover a wide range of variations.
Therefore, unseen point cloud observation but from the same task can be approximately mapped to $\mathcal{D}$ by $\mathrm{SE}$(3) transformations.
For example, in robotic manipulation tasks, a dataset may include multiple demonstrations of grasping the same object from different orientations and viewpoints. These demonstrations inherently capture variations such as rotations, translations, and perspective shifts, all of which can be effectively modeled as rigid body transformations in 3D space. Consequently, unseen geometric layouts can often be mapped to known layouts in the dataset through appropriate $\mathrm{SE}$(3) transformations.
}

\added{
To simulate observations with new poses not present in the original dataset, we define a transformed variant using $\mathrm{SE}$(3) transformations.
Specifically, for each group $\mathcal{G}_i$ and each sample $k$, let
$\mathbf{R}_i^k\!\in\!\mathrm{SO}(3)$ and $\mathbf{b}_i^k\!\in\!\mathbb{R}^{3\times 1}$
be randomly sampled. Define
\begin{equation*}
\mathcal{D}^{\mathrm{rd}} = \{\mathcal{G}_1^{\mathrm{rd}}, \ldots, \mathcal{G}_p^{\mathrm{rd}}\},\qquad
\mathcal{G}_i^{\mathrm{rd}} = \{\mathbf{R}_i^k \mathbf{x}_i^{k} + \mathbf{b}_i^k\}_{k=1}^{\ell}.
\end{equation*}
Here, the superscript ``rd” denotes randomly transformed representation.
}

\added{
Notably, Assumptions~\ref{asp:group} and \ref{asp:ood} still hold for $\mathcal{D}^{\mathrm{rd}}$.
Moreover, in most cases, data points generated by randomly sampled transformations are considered unseen data with respect to the original dataset, which we regard as seen data.
Based on this observation, we propose leveraging the equivariance property to map both seen and unseen samples to a canonical representation, which serves as a standardized representation for each group:
$
\mathcal{D}^{\mathrm{cn}} = \{\mathbf{x}_1^\mathrm{cn}, \mathbf{x}_2^\mathrm{cn}, \cdots, \mathbf{x}_p^\mathrm{cn}\},
$
where the superscript “cn” denotes canonical representation. Each \( \mathbf{x}_i^\mathrm{cn} \) denotes the canonical representation shared by both \( \mathcal{G}_i \) and its transformed counterpart \( \mathcal{G}_i^\mathrm{rd} \), such that
}
\added{
\begin{equation*}
\label{canonical}
\forall \mathbf{x} \in \mathcal{G}_i \cup \mathcal{G}_i^\mathrm{rd},\ \exists \mathbf{R}, \mathbf{b}, \quad \mathbf{x} = \mathbf{R} \mathbf{x}_i^\mathrm{cn} + \mathbf{b}.
\end{equation*}
Therefore, for any $\mathrm{SE}(3)$-equivariant function \( f \), it holds that:
}
\added{
\begin{equation}
\label{canonical_equivariant}
\forall\, \mathbf{x} \in \mathcal{G}_i \cup \mathcal{G}_i^{\mathrm{rd}},\  
\exists\, \mathbf{R}, \mathbf{b}, \quad
f(\mathbf{x}) = \mathbf{R} f(\mathbf{x}_i^{\mathrm{cn}}) + \mathbf{b}.
\end{equation}}
\added{
This implies that the equivariance property of \( f \) allows $\mathrm{SE}(3)$ transformations to be factored out from the original dataset, which can in turn benefit the training of a robot policy $\pi$. By leveraging this property and estimating the transformation parameters \( \mathbf{R} \) and \( \mathbf{b} \) as described in Equation~\eqref{canonical_equivariant}, the policy \( \pi \) can generalize any observation within the equivariant group as
}
\added{
\begin{equation}
\label{canonical_policy}
\forall \mathbf{x} \in \mathcal{G}_i \cup \mathcal{G}_i^{\mathrm{rd}},\ \exists \mathbf{R}, \mathbf{b}, \quad
\pi(\mathbf{x}) = \mathbf{R} \pi(\mathbf{x}_i^{\mathrm{cn}}) + \mathbf{b}.
\end{equation}
}
\added{
Consequently, once the transformation parameters are determined, both seen and unseen observations, whether from \( \mathcal{D} \) or \( \mathcal{D}^{\mathrm{rd}} \), can be reliably mapped to a canonical form.
The policy can thus operate consistently on the canonical representation, allowing the policy to achieve generalizable performance.}

\subsection{\added{Differences Between Canonical Representation and SE(3) Data Augmentation}}
\added{
To better understand the benefits of canonical representations, we provide a conceptual comparison with $\mathrm{SE}(3)$ data augmentation, which is a commonly used technique for improving generalization to rigid pose variations in the data.}

\added{
As illustrated in Fig.~\ref{fig:data_aug}(a), $\mathrm{SE}(3)$ data augmentation operates by applying random rigid transformations to the input data.
The augmented samples, together with the original data, are then used to extract observation features, with the aim of improving inference-time robustness by exposing the model to diverse poses.
In contrast, Fig.~\ref{fig:data_aug}(b) shows that canonical representation explicitly normalizes inputs to a consistent form, regardless of their original transformation, before passing them to the downstream network.}

\added{
However, data augmentation has fundamental limitations in pose generalization. Since the augmented samples are finite and sampled from a subset of all possible rigid transformations, the trained policy may fail to generalize to unseen poses that were not covered during training (see the 'Rollout' stage in Fig.~\ref{fig:data_aug}(a)). This leads to degraded performance when encountering new viewpoints or spatial configurations of the same object. In contrast, the canonical representation approach addresses this limitation by utilizing an equivariant mapping that transforms any instance in the same equivariant group into a unique canonical pose. As a result, the network always observes the same canonicalized input, enabling better generalization even to unseen but transformation-equivalent inputs (see the 'Rollout' stage in Fig.~\ref{fig:data_aug}(b)).}

\added{
Formally, canonical representation can be seen as a form of pre-alignment that enforces invariance to a certain group of transformations. Instead of requiring the network to learn this invariance implicitly through data diversity, the equivariant transformation module explicitly reduces the input variation.
This enables the policy to learn action mappings from a standardized input space.}

\added{To enhance readability, Table~\ref{tab:notation} summarizes the key notation used throughout the paper before we proceed to the Method section.}

\begin{table}[t]
\renewcommand{\arraystretch}{1.5}
\caption{\added{Key notation used in the paper.}}
\label{tab:notation}
\centering
\scriptsize
\begin{tabularx}{\linewidth}{@{}>{\raggedright\arraybackslash}p{0.28\linewidth} X@{}}
\toprule
\textbf{Symbol} & \textbf{Meaning} \\
\midrule
$\mathcal{D},\;\mathcal{G}$ & A set of 3D point clouds and an equivariant group (also a subset of $\mathcal{D}$). \\

$\mathbf{R}\in \mathrm{SO}(3),\;\mathbf{b}\in\mathbb{R}^3$ & Rotation matrix and translation vector. \\

$\mathbf{x}_i^{\ell} \in \mathbb{R}^{3\times N}$ &
The \(\ell\)-th point cloud sample in group \(\mathcal{G}_i\); a \(3\times N\) array whose \(N\) columns are 3D points. \\

$\mathbf{x},\;\mathbf{y}$ & Generic point cloud samples. \\

$\mathbf{o},\;\mathbf{a},\;\mathbf{s}$ & Observation, action, and robot proprioceptive state. \\

$f,\;\pi$ & Equivariant function and policy mapping. \\

``rd”, ``cn”, ``mn”, ``de” & Randomly transformed representation, canonical representation, mean operator, and decentered representation. \\

``pos”, ``ori”, ``grip” & End-effector position, orientation, and gripper open width. \\
\bottomrule
\end{tabularx}
\end{table}

\section{Method}
\label{sec:method}
In this section,
\added{we first present the data canonicalization procedure, which estimates the \(\mathrm{SE}(3)\) transformation that maps each sample to a canonical frame. We then introduce the canonical policy framework, which achieves \(\mathrm{SE}(3)\)-equivariant visual imitation by mapping observations, robot states, and actions into a shared canonical frame.}
Finally, we describe the point cloud encoder designed for feature extraction.

\subsection{SE(3) Equivariance via Data Canonicalization}
\label{sec:preprocessing}
\added{The formulation in Section~\ref{sec:theory} builds the theoretical foundation of canonical representations for 3D observations, enabling the policy to generalize to unseen observations that are SE(3)-equivariant to those seen during training. The key to canonical representation is determining the transformation parameters in Eq.~\eqref{canonical_policy}.}
\added{Therefore,} \added{in} this section, we formally derive how such a canonical pose can be computed via an $\mathrm{SE}(3)$-equivariant canonicalization procedure. Specifically, we describe how the rotation matrix is estimated using an $\mathrm{SO}(3)$-equivariant network, enabling point clouds related by rigid motions to be mapped to a consistent canonical frame.

According to Assumption~\ref{asp:group}, suppose \( \mathbf{x}, \mathbf{y} \in \mathcal{G}_i \) are related by a rigid transformation, i.e., \( \mathbf{y} = \mathbf{R} \mathbf{x} + \mathbf{b} \). The decentered form of \( \mathbf{y} \) is then given by:
\begin{align*}
\mathbf{y}^\mathrm{de}&=\mathbf{y}-\mathbf{y}^\mathrm{mn} \\
&=\mathbf{R}\mathbf{x}+\mathbf{b}-(\mathbf{R}\mathbf{x}^\mathrm{mn}+\mathbf{b}) \\
&=\mathbf{R}(\mathbf{x}-\mathbf{x}^\mathrm{mn}) \\
&=\mathbf{R}\mathbf{x}^\mathrm{de},
\end{align*}
\added{where the superscripts ``de'' and ``mn'' denote the decentered representation and the mean operator, respectively.
}
This transformation effectively eliminates the effect of translation, making it translation invariance, ensuring that the relationship between any two decentered elements is solely determined by a rotation.

Let \( \Phi \) be an arbitrary SO(3)-equivariant network that satisfies the equivariance property:
$
\Phi(\mathbf{R} \mathbf{x}) = \mathbf{R} \Phi(\mathbf{x}).
$
Applying this property to the decentered forms \( \mathbf{x}^\mathrm{de} \) and \( \mathbf{y}^\mathrm{de} \), which belong to the same equivariant group, yields:
$
\Phi(\mathbf{y}^\mathrm{de}) = \mathbf{R} \Phi(\mathbf{x}^\mathrm{de}).
$
Suppose the output feature \( \Phi(\mathbf{x}^\mathrm{de}) \) consists of two rotation-equivariant vectors, denoted as \( \mathbf{r}_{\mathbf{x}}^1 \in \mathbb{R}^{3 \times 1} \) and \( \mathbf{r}_{\mathbf{x}}^2 \in \mathbb{R}^{3 \times 1} \), i.e.,
\begin{equation}
\label{eq:Phi_x}
\Phi(\mathbf{x}^\mathrm{de}) = \{ \mathbf{r}_{\mathbf{x}}^1, \mathbf{r}_{\mathbf{x}}^2 \},
\end{equation}
then the output corresponding to \( \mathbf{y}^\mathrm{de} \) must take the form:
\begin{equation}
\label{eq:Phi_y}
\Phi(\mathbf{y}^\mathrm{de})
=\{\mathbf{r}_{\mathbf{y}}^1, \mathbf{r}_{\mathbf{y}}^2\}
=\{\mathbf{R} \mathbf{r}_{\mathbf{x}}^1, \mathbf{R} \mathbf{r}_{\mathbf{x}}^2\}.
\end{equation}

A natural instantiation of such an SO(3)-equivariant network is the Vector Neuron framework~\cite{VN}, as it operates directly on vector features rather than scalars.
Specifically, Vector Neuron represents each feature as a 3D vector and aggregates them into a \( 3 \times D \) tensor. Although each column corresponds to a distinct equivariant feature component, they are not transformed independently. Instead, the entire tensor undergoes a shared transformation under a single rotation matrix, preserving global rotational equivariance.
As a result, Equations~\eqref{eq:Phi_x} and~\eqref{eq:Phi_y} hold by design.

\begin{proposition}
Let \( \mathbf{x}^\mathrm{de}, \mathbf{y}^\mathrm{de} \) denote two decentered elements within the same equivariant group, and \( \mathbf{R}_{\mathbf{x}}, \mathbf{R}_{\mathbf{y}} \in \mathrm{SO}(3) \) be the rotation matrices constructed via Schmidt orthogonalization from the respective SO(3)-equivariant network outputs. Then, the canonicalized representations,
\[
{\mathbf{x}}^\mathrm{cn} = \mathbf{R}_{\mathbf{x}}^{-1} \mathbf{x}^\mathrm{de}, \quad
{\mathbf{y}}^\mathrm{cn} = \mathbf{R}_{\mathbf{y}}^{-1} \mathbf{y}^\mathrm{de},
\]
are equal, i.e., \( {\mathbf{x}}^\mathrm{cn} = {\mathbf{y}}^\mathrm{cn} \).
\end{proposition}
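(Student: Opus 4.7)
The strategy is to reduce the claim \(\mathbf{x}^{\mathrm{cn}} = \mathbf{y}^{\mathrm{cn}}\) to the single identity
\(\mathbf{R}_{\mathbf{y}} = \mathbf{R}\,\mathbf{R}_{\mathbf{x}}\),
where \(\mathbf{R}\) is the rotation relating \(\mathbf{x}^{\mathrm{de}}\) and \(\mathbf{y}^{\mathrm{de}}\) via \(\mathbf{y}^{\mathrm{de}} = \mathbf{R}\,\mathbf{x}^{\mathrm{de}}\), as already derived in Section~\ref{sec:data_preprocessing}. Once this identity is in hand, a one-line computation finishes the proof:
\[
\mathbf{y}^{\mathrm{cn}} \;=\; \mathbf{R}_{\mathbf{y}}^{-1}\mathbf{y}^{\mathrm{de}} \;=\; (\mathbf{R}\,\mathbf{R}_{\mathbf{x}})^{-1}(\mathbf{R}\,\mathbf{x}^{\mathrm{de}}) \;=\; \mathbf{R}_{\mathbf{x}}^{-1}\mathbf{x}^{\mathrm{de}} \;=\; \mathbf{x}^{\mathrm{cn}}.
\]

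First, I would unpack how \(\mathbf{R}_{\mathbf{x}}\) and \(\mathbf{R}_{\mathbf{y}}\) are assembled from the network outputs. By Equations~\eqref{eq:Phi_x} and~\eqref{eq:Phi_y}, we have \(\mathbf{r}_{\mathbf{y}}^1 = \mathbf{R}\,\mathbf{r}_{\mathbf{x}}^1\) and \(\mathbf{r}_{\mathbf{y}}^2 = \mathbf{R}\,\mathbf{r}_{\mathbf{x}}^2\). Schmidt orthogonalization builds the three columns of \(\mathbf{R}_{\mathbf{x}}\) as
\(\mathbf{e}_1^{\mathbf{x}} = \mathbf{r}_{\mathbf{x}}^1/\|\mathbf{r}_{\mathbf{x}}^1\|\),
\(\mathbf{e}_2^{\mathbf{x}} = \tilde{\mathbf{r}}_{\mathbf{x}}^2 / \|\tilde{\mathbf{r}}_{\mathbf{x}}^2\|\) with
\(\tilde{\mathbf{r}}_{\mathbf{x}}^2 = \mathbf{r}_{\mathbf{x}}^2 - \langle \mathbf{r}_{\mathbf{x}}^2, \mathbf{e}_1^{\mathbf{x}}\rangle\,\mathbf{e}_1^{\mathbf{x}}\),
and \(\mathbf{e}_3^{\mathbf{x}} = \mathbf{e}_1^{\mathbf{x}} \times \mathbf{e}_2^{\mathbf{x}}\); the columns of \(\mathbf{R}_{\mathbf{y}}\) are defined analogously from \(\mathbf{r}_{\mathbf{y}}^1, \mathbf{r}_{\mathbf{y}}^2\). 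The key step is then to verify column-by-column that \(\mathbf{e}_i^{\mathbf{y}} = \mathbf{R}\,\mathbf{e}_i^{\mathbf{x}}\) for \(i=1,2,3\), which immediately gives \(\mathbf{R}_{\mathbf{y}} = \mathbf{R}\,\mathbf{R}_{\mathbf{x}}\).

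The verification uses three elementary properties of \(\mathbf{R} \in \mathrm{SO}(3)\): norm preservation \(\|\mathbf{R}\mathbf{v}\| = \|\mathbf{v}\|\), inner-product preservation \(\langle \mathbf{R}\mathbf{u}, \mathbf{R}\mathbf{v}\rangle = \langle \mathbf{u}, \mathbf{v}\rangle\), and the cross-product identity \((\mathbf{R}\mathbf{a}) \times (\mathbf{R}\mathbf{b}) = (\det \mathbf{R})\,\mathbf{R}(\mathbf{a} \times \mathbf{b}) = \mathbf{R}(\mathbf{a} \times \mathbf{b})\) since \(\det \mathbf{R} = 1\). Property (i) handles \(\mathbf{e}_1\); properties (i)--(ii) together handle the projection-subtraction and normalization producing \(\mathbf{e}_2\); and property (iii) handles \(\mathbf{e}_3\). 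Each of these is a short substitution that pushes \(\mathbf{R}\) through the Schmidt operations.

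The main obstacle I anticipate is the well-posedness of the procedure: Schmidt orthogonalization requires \(\mathbf{r}^1 \neq \mathbf{0}\) and \(\mathbf{r}^1, \mathbf{r}^2\) linearly independent. I would address this by noting that \(\mathbf{R}\) is a bijection sending linearly independent pairs to linearly independent pairs, so the non-degeneracy condition is satisfied for \((\mathbf{r}_{\mathbf{x}}^1, \mathbf{r}_{\mathbf{x}}^2)\) if and only if it is satisfied for \((\mathbf{r}_{\mathbf{y}}^1, \mathbf{r}_{\mathbf{y}}^2)\); in the generic regime in which \(\Phi\) is trained, the pair is linearly independent and the proof goes through as above.
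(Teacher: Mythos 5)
Your proposal is correct and follows essentially the same route as the paper: push \(\mathbf{R}\) through each Schmidt orthogonalization step (using norm, inner-product, and cross-product equivariance) to obtain \(\mathbf{R}_{\mathbf{y}} = \mathbf{R}\mathbf{R}_{\mathbf{x}}\), and then cancel in \(\mathbf{y}^{\mathrm{cn}} = (\mathbf{R}\mathbf{R}_{\mathbf{x}})^{-1}\mathbf{R}\mathbf{x}^{\mathrm{de}} = \mathbf{x}^{\mathrm{cn}}\). Your added remark on well-posedness (requiring \(\mathbf{r}_{\mathbf{x}}^1 \neq \mathbf{0}\) and linear independence of the two network output vectors) is a small point the paper leaves implicit, but it does not change the argument.
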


\begin{proof}
We first normalize the first vector:
\begin{equation}
\label{ux1}
\mathbf{u}^1_{\mathbf{x}} = \frac{\mathbf{r}_{\mathbf{x}}^1}{\|\mathbf{r}_{\mathbf{x}}^1\|}, 
\end{equation}
then orthogonalize the second vector with respect to the first and normalize it:
\[
\mathbf{u}^2_{\mathbf{x}} = \frac{\mathbf{r}_{\mathbf{x}}^2 - ({\mathbf{u}^1_{\mathbf{x}}}^\top \mathbf{r}_{\mathbf{x}}^2) \mathbf{u}^1_{\mathbf{x}}}{\left\| \mathbf{r}_{\mathbf{x}}^2 - ({\mathbf{u}^1_{\mathbf{x}}}^\top \mathbf{r}_{\mathbf{x}}^2) \mathbf{u}^1_{\mathbf{x}} \right\|},
\]
and finally construct the third orthonormal basis vector via the cross product:
$
\mathbf{u}^3_{\mathbf{x}} = \mathbf{u}^1_{\mathbf{x}} \times \mathbf{u}^2_{\mathbf{x}}.
$
The resulting rotation matrix corresponding to $\mathbf{x}^\mathrm{de}$ is then given by:
\begin{equation}
\label{eq:rotation_matrix}
\mathbf{R}_{\mathbf{x}} = [\mathbf{u}^1_{\mathbf{x}},\ \mathbf{u}^2_{\mathbf{x}},\ \mathbf{u}^3_{\mathbf{x}}].
\end{equation}
This procedure is inherently SO(3)-equivariant, since normalization, projection subtraction, and cross product are all equivariant under rotation.

Similarly, given the equivariant outputs \( \mathbf{r}_{\mathbf{y}}^1 = \mathbf{R} \mathbf{r}_{\mathbf{x}}^1 \) and \( \mathbf{r}_{\mathbf{y}}^2 = \mathbf{R} \mathbf{r}_{\mathbf{x}}^2 \), we construct the rotation matrix \( \mathbf{R}_{\mathbf{y}} \in \mathrm{SO}(3) \) following the same Schmidt orthogonalization process. Specifically, we have:
\begin{align}
\mathbf{u}^1_{\mathbf{y}} &= \frac{\mathbf{r}_{\mathbf{y}}^1}{\|\mathbf{r}_{\mathbf{y}}^1\|} = \frac{\mathbf{R} \mathbf{r}_{\mathbf{x}}^1}{\|\mathbf{R} \mathbf{r}_{\mathbf{x}}^1\|} = \mathbf{R} \frac{\mathbf{r}_{\mathbf{x}}^1}{\|\mathbf{r}_{\mathbf{x}}^1\|} = \mathbf{R} \mathbf{u}^1_{\mathbf{x}}, \label{eq:uy1} \\
\mathbf{u}^2_{\mathbf{y}} &= \frac{\mathbf{r}_{\mathbf{y}}^2 - ({\mathbf{u}^1_{\mathbf{y}}}^{\top}
\mathbf{r}_{\mathbf{y}}^2)\mathbf{u}^1_{\mathbf{y}}}{\left\|\mathbf{r}_{\mathbf{y}}^2 - ({\mathbf{u}^1_{\mathbf{y}}}^{\top} \mathbf{r}_{\mathbf{y}}^2)\mathbf{u}^1_{\mathbf{y}}\right\|} \notag \\
&= \frac{\mathbf{R} \mathbf{r}_{\mathbf{x}}^2 - (\mathbf{R} \mathbf{u}^1_{\mathbf{x}})^\top \mathbf{R} \mathbf{r}_{\mathbf{x}}^2 \mathbf{R} \mathbf{u}^1_{\mathbf{x}}}{\left\| \mathbf{R} \mathbf{r}_{\mathbf{x}}^2 - (\mathbf{R} \mathbf{u}^1_{\mathbf{x}})^\top \mathbf{R} \mathbf{r}_{\mathbf{x}}^2 \mathbf{R} \mathbf{u}^1_{\mathbf{x}} \right\|} \notag \\
&= \frac{\mathbf{R} \left( \mathbf{r}_{\mathbf{x}}^2 - ({\mathbf{u}^1_{\mathbf{x}}}^{\top} \mathbf{r}_{\mathbf{x}}^2) \mathbf{u}^1_{\mathbf{x}} \right)}{\left\| \mathbf{R} \left( \mathbf{r}_{\mathbf{x}}^2 - ({\mathbf{u}^1_{\mathbf{x}}}^{\top} \mathbf{r}_{\mathbf{x}}^2) \mathbf{u}^1_{\mathbf{x}} \right) \right\|} \notag \\
&= \mathbf{R} \frac{ \mathbf{r}_{\mathbf{x}}^2 - ({\mathbf{u}^1_{\mathbf{x}}}^{\top} \mathbf{r}_{\mathbf{x}}^2) \mathbf{u}^1_{\mathbf{x}} }{ \left\| \mathbf{r}_{\mathbf{x}}^2 - ({\mathbf{u}^1_{\mathbf{x}}}^{\top} \mathbf{r}_{\mathbf{x}}^2) \mathbf{u}^1_{\mathbf{x}} \right\| } = \mathbf{R} \mathbf{u}^2_{\mathbf{x}}, \notag \\
\mathbf{u}^3_{\mathbf{y}} &= \mathbf{u}^1_{\mathbf{y}} \times \mathbf{u}^2_{\mathbf{y}} = (\mathbf{R} \mathbf{u}^1_{\mathbf{x}}) \times (\mathbf{R} \mathbf{u}^2_{\mathbf{x}}) = \mathbf{R} (\mathbf{u}^1_{\mathbf{x}} \times \mathbf{u}^2_{\mathbf{x}}) = \mathbf{R} \mathbf{u}^3_{\mathbf{x}}. \notag
\end{align}
Then the rotation matrix corresponding to $\mathbf{y}^\mathrm{de}$ is given by:
\begin{equation*}
\mathbf{R}_{\mathbf{y}} = [\mathbf{u}^1_{\mathbf{y}},\ \mathbf{u}^2_{\mathbf{y}},\ \mathbf{u}^3_{\mathbf{y}}]
=\mathbf{R}[\mathbf{u}^1_{\mathbf{x}},\ \mathbf{u}^2_{\mathbf{x}},\ \mathbf{u}^3_{\mathbf{x}}]
=\mathbf{R}\mathbf{R}_{\mathbf{x}}.
\end{equation*}

Thus, by computing the canonical pose of \( \mathbf{x}^\mathrm{de} \) via inverse rotation with \( \mathbf{R}_{\mathbf{x}} \), we obtain:
$
{\mathbf{x}}^\mathrm{cn} = \mathbf{R}_{\mathbf{x}}^{-1} \mathbf{x}^\mathrm{de}.
$
Similarly, the canonical pose of \( \mathbf{y}^\mathrm{de} \) is given by:
\begin{equation*}
{\mathbf{y}}^\mathrm{cn} = \mathbf{R}_{\mathbf{y}}^{-1} \mathbf{y}^\mathrm{de} = (\mathbf{R} \mathbf{R}_{\mathbf{x}})^{-1} \mathbf{R} \mathbf{x}^\mathrm{de} = \mathbf{R}_{\mathbf{x}}^{-1} \mathbf{x}^\mathrm{de} = {\mathbf{x}}^\mathrm{cn}.
\end{equation*}
\end{proof}
This result implies that \( {\mathbf{y}}^\mathrm{cn} = {\mathbf{x}}^\mathrm{cn} \); in other words, all elements within the same equivariant group are mapped to a unique canonical representation.

\begin{figure}[t]
    \centering
    \includegraphics[width=1.0\linewidth]{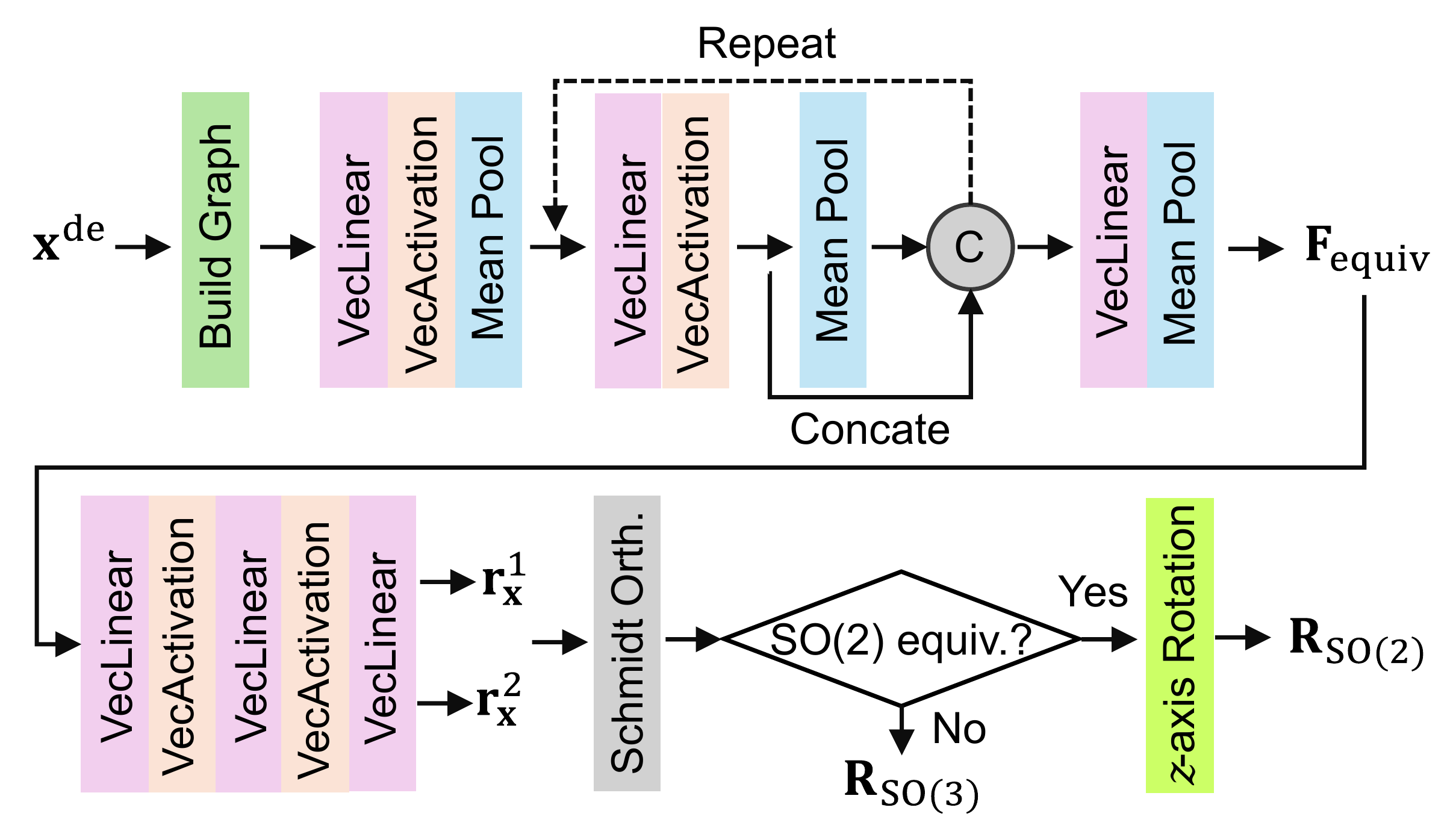}
    \caption{\added{Vector Neuron framework for estimating $\mathrm{SO}(3)$-equivariant rotation matrix. A graph is built from the decentered point cloud, local features are aggregated into global equivariant features, and Schmidt orthogonalization generates a rotation in $\mathrm{SO}(3)$ or $\mathrm{SO}(2)$ that aligns the input to a canonical frame.}}
    \label{fig:SO3}
\end{figure}

In this case, the data within the same group will eventually convert to the identical representation. If the policy is trained on such a canonical pose, then when it encounters a completely \added{unseen} data that is a rotated or translated version of a previously seen sample, the canonical mapping step will transform it into the same canonical form of the seen one. Therefore, the policy can seamlessly generalize across variations in rotation and translation, significantly improving its robustness.

While our canonicalization framework is derived to be fully $\mathrm{SO}(3)$-equivariant, we observe that, by convention, the reconstructed point clouds in both simulated and real-world settings consistently have their $z$-axis aligned vertically upward. As a result, out-of-distribution variations are effectively limited to in-plane $\mathrm{SO}(2)$ rotations about the $z$-axis. 

Although $\mathrm{SO}(2) \subset \mathrm{SO}(3)$, explicitly handling only $\mathrm{SO}(2)$ variations can be beneficial in practice. It simplifies the canonicalization process and eliminates residual variation caused by estimating full 3D orientations when only in-plane rotation is present. 

Let \( \mathbf{y}^\mathrm{de} = \rho(\theta) \mathbf{x}^\mathrm{de} \), where \( \rho(\theta) \in \mathrm{SO}(2) \subset \mathrm{SO}(3)\) denotes a rotation about the \( z \)-axis, by angle $\theta$. Let an $\mathrm{SO}(3)$-equivariant network estimate rotation matrices \( \mathbf{R}_{\mathbf{x}}, \mathbf{R}_{\mathbf{y}} \in \mathrm{SO}(3) \) from \( \mathbf{x}^\mathrm{de} \) and \( \mathbf{y}^\mathrm{de} \), respectively.
\begin{proposition}
Let \( \rho(\theta_{\mathbf{x}}) \) and \( \rho(\theta_{\mathbf{y}}) \) denote the rotation matrices obtained by extracting the in-plane rotational components, i.e., the rotations around the \( z \)-axis, from the corresponding full rotation matrices \( \mathbf{R}_{\mathbf{x}} \) and \( \mathbf{R}_{\mathbf{y}} \), respectively.
Then, the canonical poses
\[
{\mathbf{x}}^\mathrm{cn} = \rho^{-1}(\theta_\mathbf{x}) \mathbf{x}^\mathrm{de}, \quad
{\mathbf{y}}^\mathrm{cn} = \rho^{-1}(\theta_\mathbf{y}) \mathbf{y}^\mathrm{de}
\]
are equal, i.e., \( {\mathbf{x}}^\mathrm{cn} = {\mathbf{y}}^\mathrm{cn} \).
\end{proposition}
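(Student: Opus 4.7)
The plan is to reduce this $\mathrm{SO}(2)$ statement to the $\mathrm{SO}(3)$ proposition already proved above, by showing that the ``extract the in-plane component'' operator intertwines with left multiplication by a pure $z$-axis rotation. Concretely, first I would invoke the preceding proposition's derivation to conclude that the full $\mathrm{SO}(3)$-equivariant network outputs satisfy $\mathbf{R}_{\mathbf{y}} = \rho(\theta) \mathbf{R}_{\mathbf{x}}$; this step is immediate because $\rho(\theta) \in \mathrm{SO}(3)$, so the equivariance chain from Equations~\eqref{eq:Phi_x}--\eqref{eq:Phi_y} and the Schmidt orthogonalization used to build the rotation matrix in Equation~\eqref{eq:rotation_matrix} carry over verbatim with $\mathbf{R}$ replaced by $\rho(\theta)$.

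Next I would formalize the in-plane extraction. A convenient choice is to define $\rho(\theta_{\mathbf{x}})$ as the rotation about $z$ by the angle $\theta_{\mathbf{x}} = \mathrm{atan2}\!\bigl((\mathbf{R}_{\mathbf{x}} \mathbf{e}_1)_y,\ (\mathbf{R}_{\mathbf{x}} \mathbf{e}_1)_x\bigr)$, i.e.\ the yaw component of the ZYX Euler decomposition. The key lemma to establish is then
\begin{equation*}
\rho(\theta_{\mathbf{y}}) = \rho(\theta)\, \rho(\theta_{\mathbf{x}}),
\end{equation*}
which I would prove by observing that $\rho(\theta)$ acts on the horizontal projection of any vector as a planar rotation by $\theta$, so $\mathrm{atan2}$ of the first-column projection simply shifts by $\theta$. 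Equivalently, writing $\mathbf{R}_{\mathbf{x}} = \rho(\theta_{\mathbf{x}})\, \mathbf{R}_{\mathbf{x}}^{\mathrm{res}}$ with $\mathbf{R}_{\mathbf{x}}^{\mathrm{res}}$ having zero yaw, we get $\mathbf{R}_{\mathbf{y}} = \rho(\theta)\, \rho(\theta_{\mathbf{x}})\, \mathbf{R}_{\mathbf{x}}^{\mathrm{res}} = \rho(\theta + \theta_{\mathbf{x}})\, \mathbf{R}_{\mathbf{x}}^{\mathrm{res}}$, and uniqueness of the ZYX decomposition (away from gimbal lock) yields $\theta_{\mathbf{y}} = \theta + \theta_{\mathbf{x}}$.

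With the lemma in hand, the rest is a one-line substitution analogous to the end of the previous proof:
\begin{equation*}
\mathbf{y}^{\mathrm{cn}} = \rho^{-1}(\theta_{\mathbf{y}})\, \mathbf{y}^{\mathrm{de}} = \rho^{-1}(\theta_{\mathbf{x}})\, \rho^{-1}(\theta)\, \rho(\theta)\, \mathbf{x}^{\mathrm{de}} = \rho^{-1}(\theta_{\mathbf{x}})\, \mathbf{x}^{\mathrm{de}} = \mathbf{x}^{\mathrm{cn}}.
\end{equation*}

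The main obstacle I expect is the lemma $\rho(\theta_{\mathbf{y}}) = \rho(\theta)\rho(\theta_{\mathbf{x}})$: its truth hinges on which convention is used to ``extract the in-plane rotational component'' from a general $\mathrm{SO}(3)$ matrix. Under the yaw-from-ZYX-Euler convention above (or equivalently projecting the image of $\mathbf{e}_1$ onto the $xy$-plane and reading off its angle), the commutation is clean; under other conventions (e.g., taking the rotation angle of a Givens factor about $z$ extracted last) one has to check compatibility with left multiplication by $\rho(\theta)$. I would therefore be explicit about the extraction rule in the proof and verify the intertwining identity directly, since without this the $\mathrm{SO}(2)$ canonicalization would not inherit the equivariance guaranteed by the $\mathrm{SO}(3)$-equivariant network $\Phi$.
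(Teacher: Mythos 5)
Your proposal is correct and follows essentially the same route as the paper: the paper likewise extracts $\theta_{\mathbf{x}}$ and $\theta_{\mathbf{y}}$ as $\mathrm{atan2}$ of the $xy$-projection of the first column of $\mathbf{R}_{\mathbf{x}}$ and $\mathbf{R}_{\mathbf{y}} = \rho(\theta)\mathbf{R}_{\mathbf{x}}$, derives $\theta_{\mathbf{y}} = \theta_{\mathbf{x}} + \theta$ from the rotational shift property of $\mathrm{atan2}$, and concludes with the same substitution $\mathbf{y}^{\mathrm{cn}} = \rho(-\theta_{\mathbf{x}}-\theta)\rho(\theta)\mathbf{x}^{\mathrm{de}} = \mathbf{x}^{\mathrm{cn}}$. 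Your explicit attention to the extraction convention (and the gimbal-lock caveat when the first column is vertical) is a reasonable refinement that the paper leaves implicit, but it does not change the argument.
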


\begin{proof}
Let the first column of \( \mathbf{R}_{\mathbf{x}} \) be \( \mathbf{u}_{\mathbf{x}}^1 = [x_1, y_1, z_1]^\top \). Since \( \mathbf{y}^\mathrm{de} = \rho(\theta) \mathbf{x}^\mathrm{de} \), and according to Equation~\eqref{eq:uy1}, the first column of \( \mathbf{R}_{\mathbf{y}} \) satisfies:
\begin{align*}
\mathbf{u}_{\mathbf{y}}^1 
&= \rho(\theta) \mathbf{u}_{\mathbf{x}}^1 \\
&=
\begin{bmatrix}
\cos\theta & -\sin\theta & 0 \\
\sin\theta & \cos\theta & 0 \\
0 & 0 & 1
\end{bmatrix}
\begin{bmatrix}
x_1 \\
y_1 \\
z_1
\end{bmatrix} \\
&=
\begin{bmatrix}
x_1 \cos\theta - y_1 \sin\theta \\
x_1 \sin\theta + y_1 \cos\theta \\
z_1
\end{bmatrix}.
\end{align*}
We extract the in-plane rotation angles by projecting $\mathbf{R}_\mathbf{x}$ and $\mathbf{R}_\mathbf{y}$ onto the \( xy \)-plane and applying the \( \mathrm{atan2} \) function:
\begin{align*}
\theta_{\mathbf{x}} &= \mathrm{atan2}(y_1, x_1), \\
\theta_{\mathbf{y}} &= \mathrm{atan2}(x_1 \sin\theta + y_1 \cos\theta,\ x_1 \cos\theta - y_1 \sin\theta).
\end{align*}
Using the rotational property of \( \mathrm{atan2} \), we observe: $\theta_{\mathbf{y}} = \theta_{\mathbf{x}} + \theta.$
Then, the canonicalized form of \( \mathbf{y}^\mathrm{de} \) becomes:
\[
{\mathbf{y}}^\mathrm{cn} = \rho^{-1}(\theta_{\mathbf{y}})\mathbf{y}^\mathrm{de} 
= \rho(-\theta_{\mathbf{x}} - \theta) \rho(\theta) \mathbf{x}^\mathrm{de} 
= \rho(-\theta_{\mathbf{x}}) \mathbf{x}^\mathrm{de} 
= {\mathbf{x}}^\mathrm{cn}.
\]
Hence, \( {\mathbf{x}}^\mathrm{cn} = {\mathbf{y}}^\mathrm{cn} \), completing the proof.
\end{proof}

Similar to the $\mathrm{SO}(3)$ case, this result demonstrates that all observations related by $\mathrm{SO}(2)$ transformations are mapped to a consistent canonical pose. This guarantees rotational consistency within the equivariant group and enables the policy to generalize reliably to unseen in-plane rotations at deployment.

\added{
Figure~\ref{fig:SO3} illustrates the detailed pipeline for estimating the 
\(\mathrm{SO}(3)\)-equivariant rotation matrix using the Vector Neuron architecture. 
The network takes the decentered point cloud \(\mathbf{x}^{\mathrm{de}}\) as input and then constructs a k-nearest neighbor (KNN) graph to capture local geometric structure 
and derive features such as relative offsets, absolute positions, and local directions, thereby enhancing robustness to noise. 
Stacked \(\mathrm{SO}(3)\)-equivariant linear layers and activation blocks with mean pooling 
yield a global equivariant feature \(\mathbf{F}_{\text{equiv}}\), 
which is then used to compute two \(\mathrm{SO}(3)\)-equivariant vectors 
\(\{\mathbf{r}^1_{\mathbf{x}},\,\mathbf{r}^2_{\mathbf{x}}\}\). 
Applying Schmidt orthogonalization to these vectors produces an 
\(\mathrm{SO}(3)\) rotation matrix. 
For tasks with only an \(\mathrm{SO}(2)\) ambiguity, the network further extracts 
the in-plane (z-axis) rotation. 
This rotation maps \(\mathbf{x}^{\mathrm{de}}\) to its canonicalized counterpart.
}

\added{
The reliability of the $\mathrm{SO(3)}$-equivariant network under noisy conditions, parameter sensitivity, and feature visualization will be systematically examined in Section~\ref{sec:sim}.
}

\subsection{\added{Canonical Policy Pipeline}}
\label{sec:overview}
\begin{figure*}[t]
    \centering
    \includegraphics[width=1.0\linewidth]{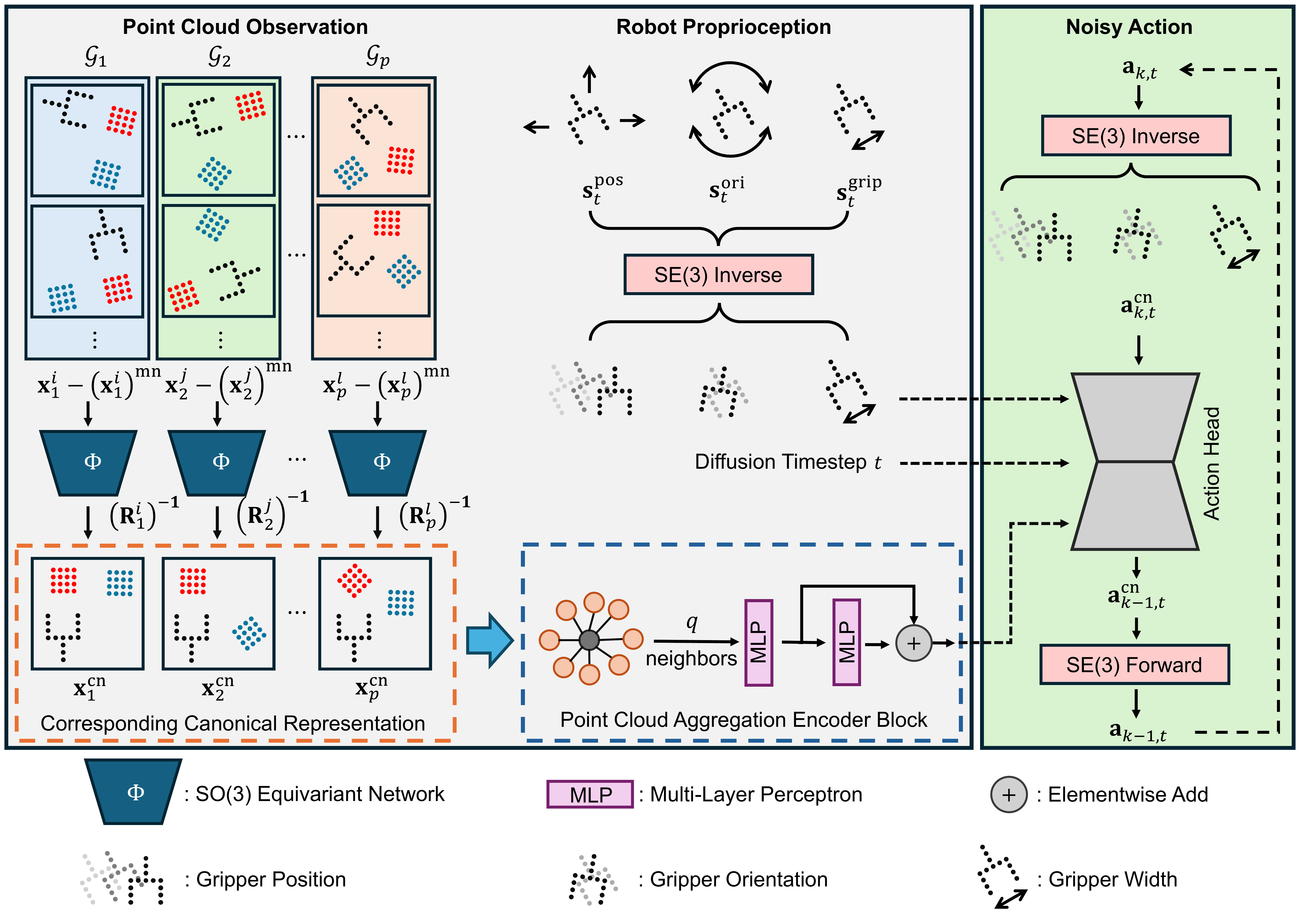}
    \caption{Overview of the canonical policy. The input point cloud \( \mathcal{G} \) is first centered by subtracting its mean \( \mathcal{G}^\mathrm{mn} \), and then processed by an \( \mathrm{SO}(3) \)-equivariant network \( \Phi \) to estimate the object rotation \( \mathbf{R} \). This rotation is subsequently used to obtain the canonicalized point cloud \( \mathcal{G}^\mathrm{cn} \), where the superscript ``cn'' denotes the canonical representation. These canonicalized point clouds are then encoded using a point cloud aggregation encoder. At each diffusion timestep \( t \), robot proprioception, including end-effector position \( \mathbf{s}^{\mathrm{pos}}_t \), orientation \( \mathbf{s}^{\mathrm{ori}}_t \), and gripper width \( \mathbf{s}^{\mathrm{grip}}_t \), is transformed into the canonical frame via an SE(3) inverse transformation, yielding canonical proprioception. A similar canonicalization is applied to the noisy action \( \mathbf{a}_{k,t} \), after which the model predicts canonical actions \( {\mathbf{a}}_{k-1,t}^\mathrm{cn} \). The output is finally mapped back to the original observation frame through an SE(3) forward transformation to obtain \( \mathbf{a}_{k-1,t} \).}
    \label{fig:Pipeline}
\end{figure*}

\added{Following the data canonicalization described in Section~\ref{sec:preprocessing}, this section presents the architecture of the canonical policy, which performs transformation-invariant action prediction by operating in a canonical coordinate frame. The core idea is to map both visual and low-dimensional inputs (e.g., robot proprioceptive state and action) to a consistent pose, reducing geometric variation before policy inference. This allows the policy to focus on task-relevant behavior rather than handling pose differences.}

\added{As shown in Fig.~\ref{fig:Pipeline}, given a point-cloud observation \( \mathbf{x} \in \mathcal{G}_i \) for which the canonical pose \(\mathbf{x}^{\mathrm{cn}}\) and the corresponding transformation have been obtained via the data canonicalization procedure, the canonical form is fed into the Point Cloud Aggregation Encoder Block (see the bottom of Fig.~\ref{fig:Pipeline}). Within this block, local-neighborhood features are extracted and processed through multiple fully connected layers to produce features for the canonicalized point cloud. The encoder design and hyperparameters are described in the next subsection.}

\added{
For robot state inputs and actions, we use end-effector positions and orientations, which are widely adopted in visual imitation learning~\cite{diffusion_policy,equidiff,DP3}.
In this setting, a data canonicalization pipeline analogous to that used for point clouds can be applied, aligning both the state and action to a canonical frame via the estimated $\mathrm{SE(3)}$ transformation.
It is worth noting that our method operates in Cartesian space rather than joint space representations (e.g., joint angles~\cite{zhao2024aloha}).}

\added{
In practice, the orientation component of both the robot state and action is not represented as a rotation matrix. The robot state typically encodes orientation using unit quaternions, which we convert to rotation matrices before applying the canonical transformation.
Similarly, the format of action orientation depends on the control mode: in absolute control, it is represented using the 6D continuous rotation formulation~\cite{zhou2019continuity}, while in relative control, it is specified as an axis-angle vector.
Both formats are first converted to rotation matrices before applying the transformation and converted back afterward for prediction.
For simplicity, we omit format conversions in the subsequent equations and use rotation matrices to represent $\mathrm{SO}(3)$ rotations.
Additionally, to streamline the derivation, we consider only a single robot state from the observation sequence \( \mathcal{O} \) of length \( m \), and a single action from the action sequence of length \( n \) in the subsequent discussion.}

\added{
More concretely, we represent the robot end-effector pose at timestep \( t \) using a homogeneous transformation matrix \( \mathbf{S}_t \in \mathrm{SE}(3) \), constructed from the position \( \mathbf{s}_t^{\mathrm{pos}} \in \mathbb{R}^{3 \times 1} \) and orientation \( \mathbf{s}_t^{\mathrm{ori}} \in \mathbb{R}^{3 \times 3} \).  
Let the canonical transformation matrix at timestep \( t \) be denoted as \( \mathbf{T}_t \in \mathrm{SE}(3) \), composed of a rotation \( \mathbf{R}_t \) and a translation vector \( \mathbf{x}_t^\mathrm{mn} \).  
The robot state is transformed via an $\mathrm{SE}(3)$ inverse operation to align it with the canonical point observation:
\[
\mathbf{S}_t^{\mathrm{cn}} = \mathbf{T}_t^{-1} \mathbf{S}_t.
\]
The gripper open width \( \mathbf{s}_t^{\mathrm{grip}} \), being invariant under rigid transformations, remains unchanged during this process.}

\added{
We now detail how canonical alignment is applied to the action representation under both absolute and relative control modes.}

\paragraph{\added{Absolute Control}} 
\added{
In the absolute control mode, the action represents the target end-effector pose in the world frame.
At diffusion step \( k \) and timestep \( t \), the action pose is represented by a homogeneous transformation matrix \( \mathbf{A}_{k,t} \in \mathrm{SE}(3) \), which is constructed from the position \( \mathbf{a}_{k,t}^{\mathrm{pos}} \in \mathbb{R}^{3 \times 1} \) and orientation \( \mathbf{a}_{k,t}^{\mathrm{ori}} \in \mathbb{R}^{3 \times 3} \).
To align it with the canonical observation frame, the following SE(3) inverse transformation is applied:}
\added{
\begin{equation}
\label{eq:action_inverse}
{\mathbf{A}}_{k,t}^{\mathrm{cn}} = \mathbf{T}_t^{-1} \mathbf{A}_{k,t}.
\end{equation}}

\added{
The canonicalized action \( {\mathbf{A}}_{k,t}^{\mathrm{cn}} \), together with the point cloud feature, robot state feature, and diffusion timestep feature, is provided as input to the Action Head for prediction (right side of Fig.~\ref{fig:Pipeline}). After denoising, the predicted canonical action \( {\mathbf{A}}_{k-1,t}^{\mathrm{cn}} \) is mapped back to the world frame using the forward SE(3) transformation:
\[
\mathbf{A}_{k-1,t} = \mathbf{T}_t {\mathbf{A}}_{k-1,t}^{\mathrm{cn}}.
\]}

\paragraph{\added{Relative Control}}
\added{
In the relative control mode, the action specifies the displacement of the end-effector pose relative to the previous timestep.  
Let the relative action pose at diffusion step \( k \) and timestep \( t \) be represented by a homogeneous transformation matrix \( \Delta \mathbf{A}_{k,t} \in \mathrm{SE}(3) \), such that}
\added{
\begin{equation}
\label{eq:relative_action}
\mathbf{A}_{k,t} = \Delta \mathbf{A}_{k,t} \cdot \mathbf{A}_{k,t-1}.
\end{equation}}

\added{
By substituting the relative action update in Equation~\eqref{eq:relative_action} into the canonical transformation (Equation~\eqref{eq:action_inverse}), we have:}
\begin{align}
\added{\mathbf{A}_{k,t}^\mathrm{cn}} &\added{= \mathbf{T}_t^{-1} \mathbf{A}_{k,t}} \notag \\
&\added{= \mathbf{T}_t^{-1} \Delta \mathbf{A}_{k,t} \cdot \mathbf{A}_{k,t-1}} \notag \\
&\added{= \left( \mathbf{T}_t^{-1} \Delta \mathbf{A}_{k,t} \mathbf{T}_t \right) 
          \cdot \left( \mathbf{T}_t^{-1} \mathbf{A}_{k,t-1} \right).} \label{eq:approx}
\end{align}

\added{
Since consecutive observations at high sampling rates, such as 10~Hz~\cite{wang2024dexcap,DP3}, typically exhibit only minor differences, particularly in smooth and continuous manipulation behaviors~\cite{diffusion_policy}, we approximate the canonical transformations at adjacent timesteps as equal, i.e., \( \mathbf{T}_{t-1} \approx \mathbf{T}_t \).
Accordingly, Equation~\eqref{eq:approx} becomes:}
\begin{align*}
\added{\mathbf{A}_{k,t}^\mathrm{cn}} &\added{= \left( \mathbf{T}_t^{-1} \Delta \mathbf{A}_{k,t} \mathbf{T}_t \right) \cdot \left( \mathbf{T}_t^{-1} \mathbf{A}_{k,t-1} \right)} \\
&\added{\approx \left( \mathbf{T}_t^{-1} \Delta \mathbf{A}_{k,t} \mathbf{T}_t \right) \cdot \left( \mathbf{T}_{t-1}^{-1} \mathbf{A}_{k,t-1} \right)} \\
&\added{= \left( \mathbf{T}_t^{-1} \Delta \mathbf{A}_{k,t} \mathbf{T}_t \right) \cdot \mathbf{A}_{k,t-1}^\mathrm{cn}}.
\end{align*}
\added{
Comparing this result with the canonical relative action update:}
\added{
\begin{equation*}
\mathbf{A}_{k,t}^\mathrm{cn} = \Delta \mathbf{A}_{k,t}^\mathrm{cn} \cdot \mathbf{A}_{k,t-1}^\mathrm{cn},
\end{equation*}}
\added{
we can identify the relative action pose expressed in the canonical frame as}
\added{
\[
\Delta \mathbf{A}_{k,t}^\mathrm{cn} = \mathbf{T}_t^{-1} \Delta \mathbf{A}_{k,t} \mathbf{T}_t.
\]}

\added{
The canonicalized relative action \( \Delta \mathbf{A}_{k,t}^\mathrm{cn} \) is used as input to the Action Head.  
After prediction, the output is transformed back to the world frame using the corresponding $\mathrm{SE}(3)$ forward transformation:}
\added{
\[
\Delta \mathbf{A}_{k-1,t} = \mathbf{T}_t \Delta \mathbf{A}_{k-1,t}^\mathrm{cn} \mathbf{T}_t^{-1}.
\]}

\added{
The gripper command \( \mathbf{a}_{k,t}^{\mathrm{grip}} \), being invariant to rigid-body transformations, remains unchanged in both absolute and relative control modes.}

\added{
This process is repeated until $k=0$. During training, the estimated action is constrained by its ground-truth counterpart. During inference, the same pipeline is followed, with the final output serving as the predicted trajectory.}

\added{
By mapping all training data to their canonical distribution, we ensure that the diffusion model learns representations invariant to different transformations within the equivariant group. This enables the model to generalize to unseen poses during evaluation, as each input is first transformed into its corresponding canonical pose, regardless of its original pose. As a result, the model can infer the correct action since it has already seen the canonicalized representation during training.}

\subsection{Point Cloud Aggregation Encoder}
\added{Having established the canonicalized observations in the previous subsection,}
we \added{now} introduce a point cloud aggregation encoder designed to extract informative features by leveraging the spatial structure inherent to point cloud inputs in the canonical frame. This facilitates robust policy learning from geometrically consistent inputs.

DP3 \cite{DP3} explores various point cloud encoders that have demonstrated effectiveness in computer vision tasks, including PointNet++ \cite{pointnet++}, PointNeXt \cite{pointnext}, and Point Transformer \cite{pointtransformer}. Their findings suggest that a simple multi-layer perceptron (MLP) is a highly effective approach for point cloud encoding in imitation learning. In addition to MLPs, we also examine alternative encoders such as DGCNN \cite{DGCNN}, with a detailed comparison presented in the following section. Based on these insights, we adopt a simple MLP architecture with residual connections, facilitating the training of deeper networks for robust feature extraction.

In image feature extraction, convolution is particularly effective as it captures both pixel-wise information and contextual dependencies from neighboring pixels. Similarly, in point cloud analysis, incorporating neighboring information enhances feature robustness, mitigating the impact of noise. While individual points may be noisy, their neighbors often offer stable contextual cues that help preserve meaningful structure.

To exploit this property, we first identify the neighbors of each point in the point cloud before applying MLP-based feature extraction. PointMLP \cite{PointMLP} introduces a geometric affine module that effectively aggregates information from neighboring points, as illustrated in Fig. \ref{fig:Affine}.

\begin{figure}[t] \centering \includegraphics[width=1.0\linewidth]{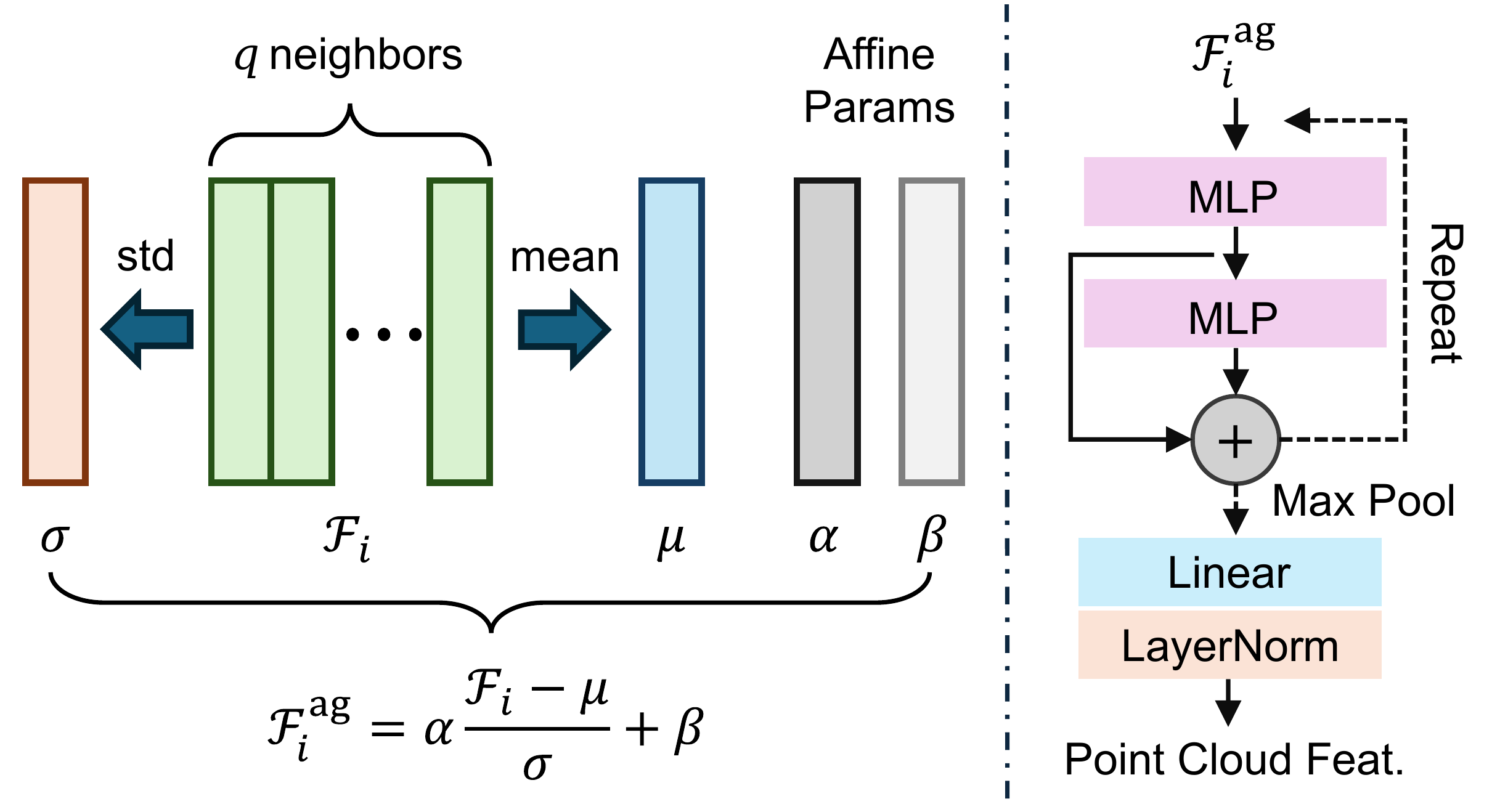} \caption{Neighborhood aggregation module. Point features are normalized and affine-transformed based on local neighbors. A residual MLP with max pooling is used to extract permutation-invariant point cloud features.} \label{fig:Affine}
\end{figure}

First, the neighborhood of each point is determined based on its $\mathit{xyz}$ coordinates, yielding a set of neighboring features represented as $\mathcal{F}_i=\{\mathbf{F}_i^1, \mathbf{F}_i^2, \dots, \mathbf{F}_i^q\}$, where $i$ denotes the $i$th point and $q$ represents the number of neighbors. Within this local neighborhood, we compute the mean $\mu$ and standard deviation $\sigma$ of the features and apply feature normalization. To further enhance robustness in feature aggregation, affine transformation parameters $\alpha$ and $\beta$ are applied to the normalized local features, producing the final aggregated features $\mathcal{F}_i^\mathrm{ag}$: 
\begin{equation*}
\label{Eq:affine}
\mathcal{F}_i^\mathrm{ag}=\alpha \frac{\mathcal{F}_i-\mu}{\sigma}+\beta,
\end{equation*}
which are subsequently fed into the residual MLP encoder for processing.
In the remaining sections, we fix $q = 10$ unless otherwise specified. The parameters $\alpha$ and $\beta$ are learned by the network.

\section{Simulation Experiments}
\label{sec:sim}
In this section, we provide a comprehensive evaluation of canonical policy in simulation environments. We first present an experiment to validate the $\mathrm{SE}(3)$ equivariance property of canonical policy. We then conduct benchmark comparisons with several representative point cloud-based baselines across a diverse set of manipulation tasks.
\added{We further perform ablation studies to highlight the contributions of key components in our method.
Finally, we present a dedicated analysis of the reliability of the rotation estimator introduced in Section~\ref{sec:preprocessing}, examining its performance under noisy inputs and illustrating the results with feature visualizations.}

\subsection{Experiment Setup}
We evaluate canonical policy across a diverse set of simulated manipulation tasks. To assess generalization and performance, we compare it against several representative point cloud-based policies under consistent settings.

\textbf{DP3}~\cite{DP3} is a diffusion-based policy that directly processes 3D point clouds using a simple MLP encoder. 

\textbf{iDP3}~\cite{iDP3} builds upon DP3 by replacing the MLP encoder with a multi-layer 1D convolutional encoder. This architecture is designed to extract multi-resolution features from point clouds, capturing more detailed and hierarchical point cloud information.

\textbf{EquiBot}~\cite{Equibot} is a \added{$\mathrm{SIM}(3)$}-equivariant policy that uses the same equivariant encoder as canonical policy, built on the Vector Neuron framework \cite{VN}. It modifies the diffusion U-Net backbone using equivariant linear, pooling, and normalization layers. To conform with the Vector Neuron framework, EquiBot reformulates low-dimensional states and actions into vector and scalar representations.

\begin{figure*}[t]
    \centering
    \includegraphics[width=\textwidth]{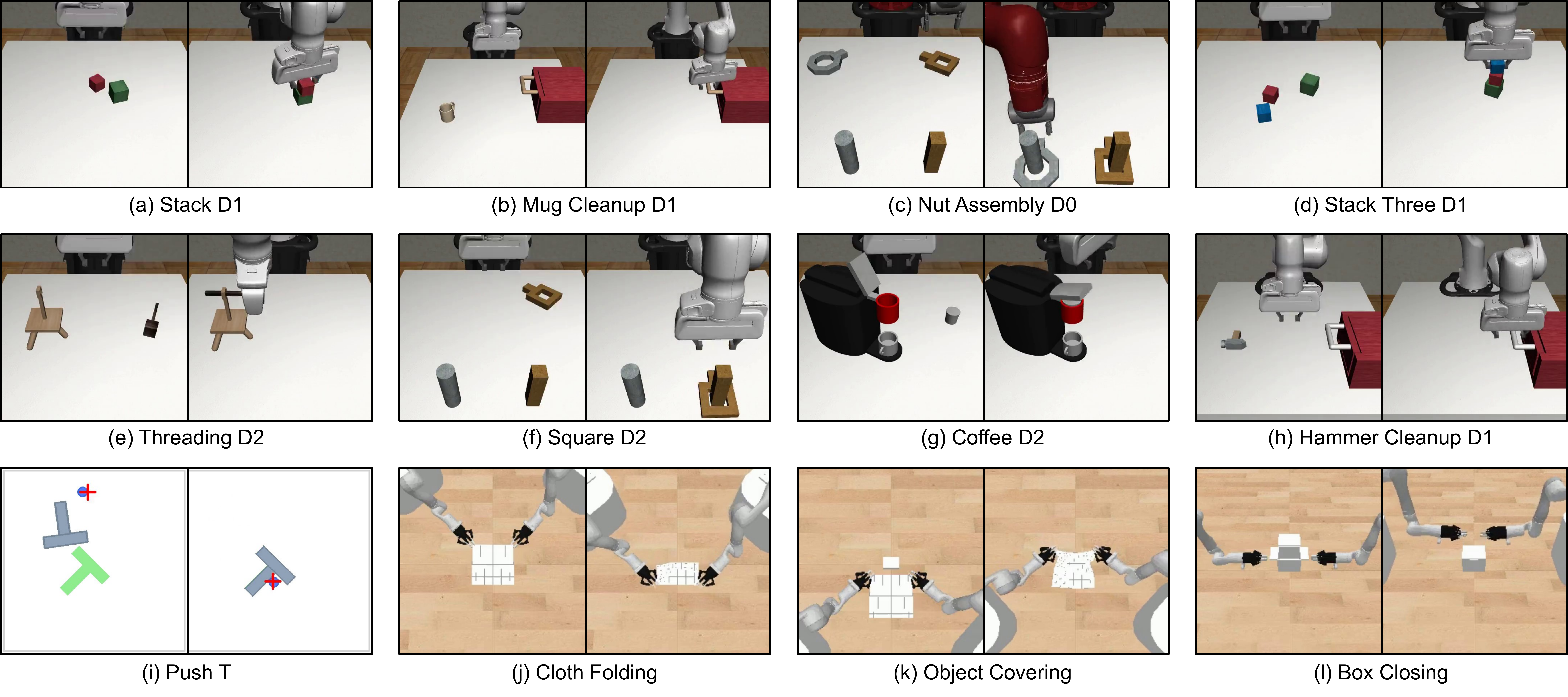}
    \caption{Visualization of the 12 simulated manipulation tasks used in the simulation benchmark.
    Subfigures (a)–(h) are from MimicGen~\cite{robomimic}, (i) is from Diffusion Policy~\cite{diffusion_policy}, and (j)–(l) are from EquiBot~\cite{Equibot}.
    In each subfigure, the left image shows the initial state, and the right image shows the goal state of the task.}
    \label{fig:sim_tasks}
\end{figure*}
We evaluate all policies on three categories of tasks, each differing in task types and complexity:

\textbf{MimicGen Tasks}~\cite{robomimic} are visualized in Fig.~\ref{fig:sim_tasks}(a)--(h). These tasks, such as coffee making and table organization, simulate realistic household manipulation scenarios. Point cloud observations are obtained from four camera views. To ensure computational efficiency without compromising performance, all point clouds are uniformly downsampled to 256 points using farthest point sampling~\cite{pointnet++}.

\textbf{Push T Task} is shown in Fig.~\ref{fig:sim_tasks}(i). Originally a 2D task from Diffusion Policy~\cite{diffusion_policy}, we adapt it to 3D by extending each of the 9 keypoints with a zero z-coordinate, forming a sparse 3D point cloud. The goal is to push a T-shaped block to a placed T-shaped target.

\textbf{Bimanual Tasks} are depicted in Fig.~\ref{fig:sim_tasks}(j)--(l). These dual-arm manipulation tasks are adapted from the released EquiBot dataset~\cite{Equibot}, which provides 1024 point clouds as observation.

For all simulation tasks, the input to all policies consists of point clouds with spatial coordinates only, excluding any color information.

In canonical policy, we employ \textit{absolute control} for the MimicGen and Push T tasks, and \textit{relative control} for the bimanual tasks.
The difference between the two control modes lies in the choice of action space used during the SE(3) inverse and forward transformations. See Section~\ref{sec:overview} for more details.

While all baselines in this section are based on diffusion models for action prediction, canonical policy framework is compatible with alternative action heads, such as flow matching. We  explore this flexibility in the ablation study.

\subsection{Validation of SE(3) Equivariance}
\begin{figure*}[t]
    \centering
    \includegraphics[width=1.0\linewidth]{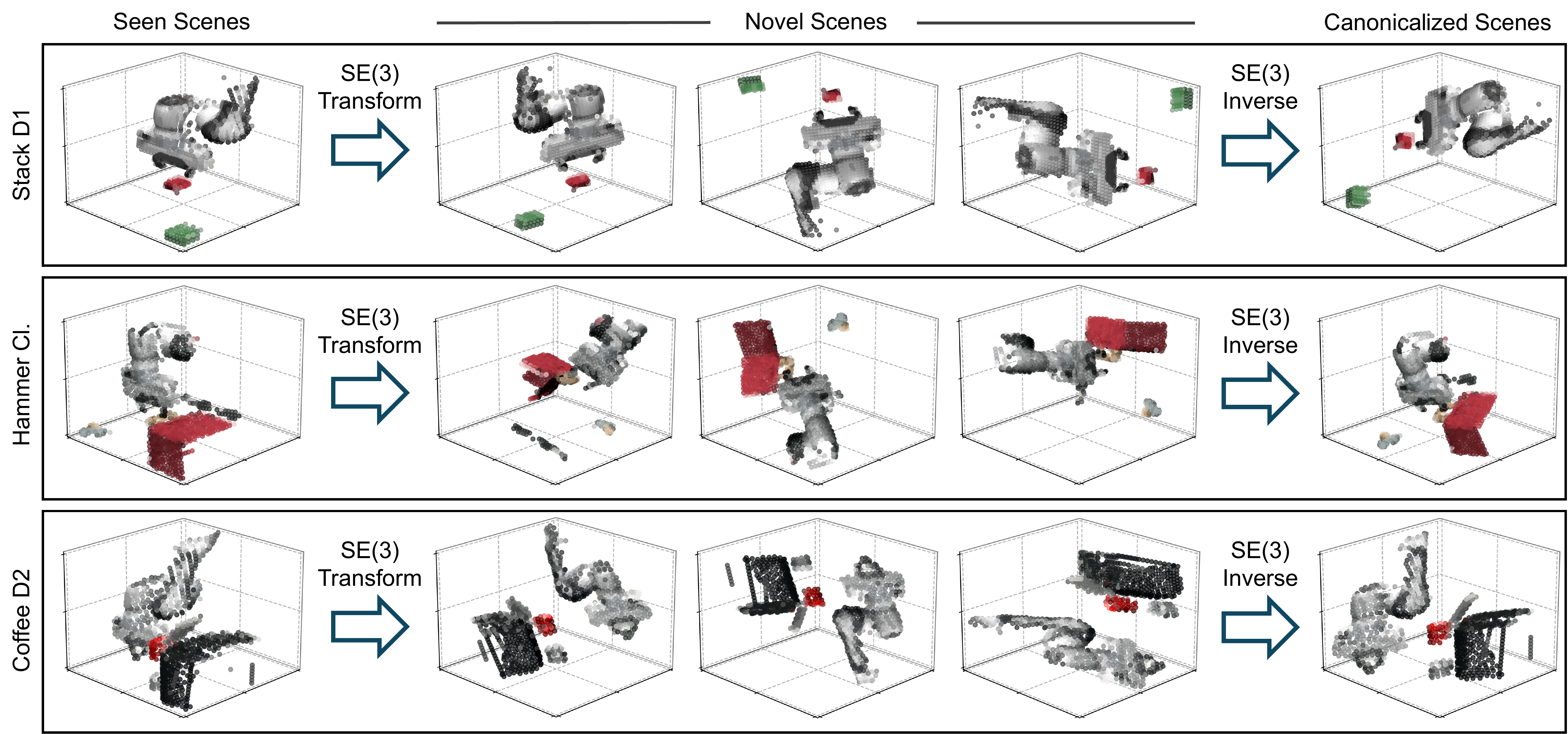}
    \caption{Canonical pose visualization across tasks. \added{Seen} and \added{novel scenes} are aligned to a shared canonical frame via estimated SE(3) inverse transforms.}
    \label{fig:vis_equi}
\end{figure*}

\begin{figure*}[t]
    \centering
    \includegraphics[width=1.0\linewidth]{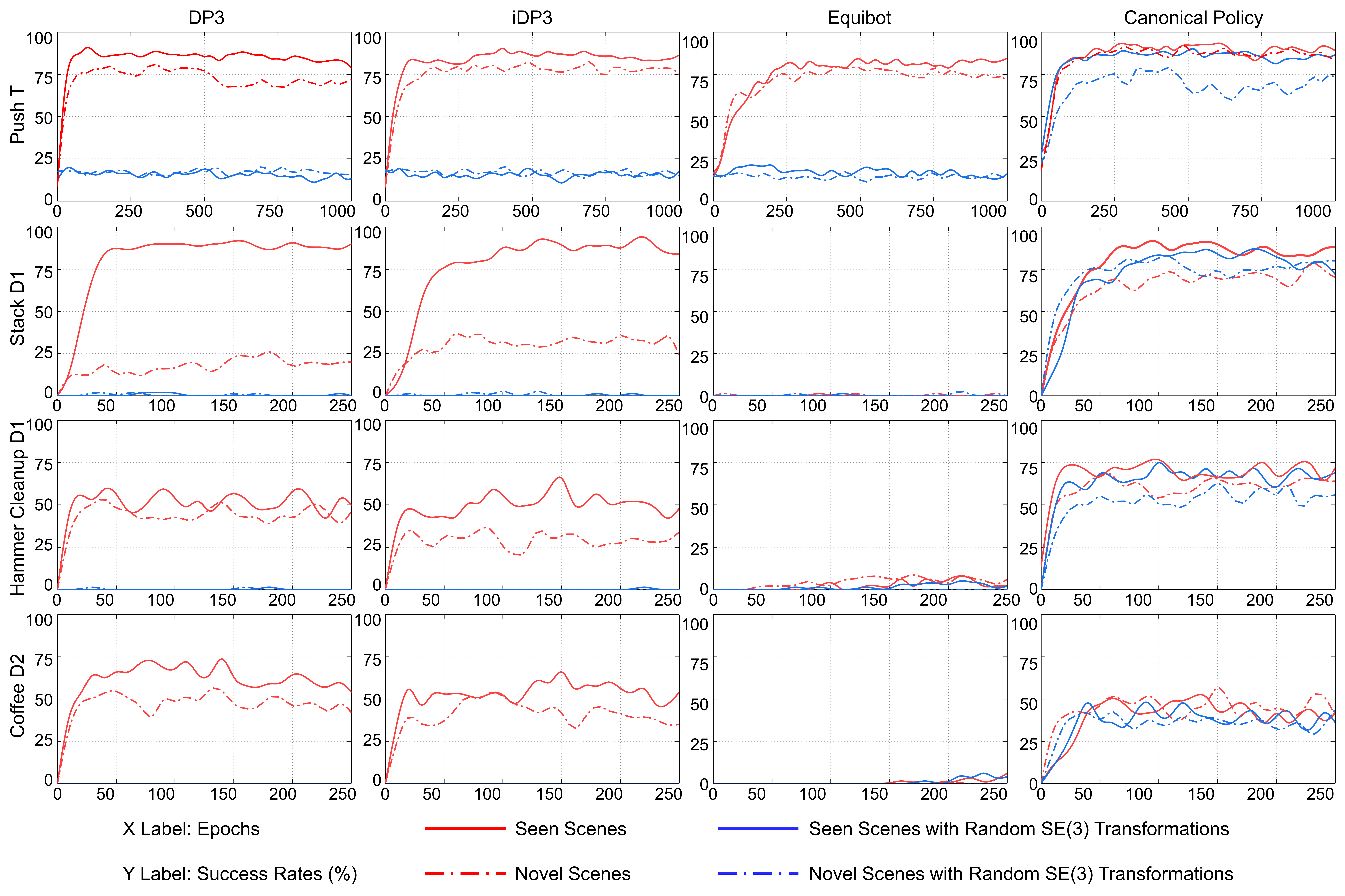}
    \caption{\added{Quantitative evaluation of different policies across tasks. Curves show success rates over training epochs for seen scenes and novel scenes, with and without random $\mathrm{SE}(3)$ transformations. Canonical Policy maintains stable performance under both seen and novel conditions, while baseline policies degrade under novel conditions.}}
    \label{fig:vis_se3_plot}
\end{figure*}

Before comparing the performance of different policies across various tasks, we first validate the theoretical foundation of canonical policy introduced in Section~\ref{sec:method}.

As illustrated in Fig.~\ref{fig:vis_equi}, we visualize three MimicGen tasks: Stack D1, Hammer Cleanup D1, and Coffee D2, to demonstrate the SE(3)-equivariance property of canonical policy. As discussed in Section~\ref{sec:theory}, the left column of Fig.~\ref{fig:vis_equi} presents point cloud observations from the training \added{dataset}, which are considered \added{seen scenes} and denoted as~$\mathcal{D}$. Next, random SE(3) transformations are applied to generate \added{novel scenes} $\mathcal{D}^\mathrm{rd}$, shown in the middle column. Finally, by applying data canonicalization pipeline described in Section~\ref{sec:preprocessing}, \added{we first center each transformed point cloud by subtracting its mean, 
then estimate the associated rotation using an \(\mathrm{SO}(3)\)-equivariant network (Equation~\eqref{eq:rotation_matrix}), 
and finally apply the inverse rotation to each point cloud, aligning it to a shared canonical frame}, as visualized in the right column of Fig.~\ref{fig:vis_equi}.

\added{To quantitatively assess the $\mathrm{SE}(3)$ equivariance of Canonical Policy, we design an experiment under four conditions, as shown in Fig.~\ref{fig:vis_se3_plot}: (i) seen scenes (training set; solid red), (ii) seen scenes with random $\mathrm{SE}(3)$ transforms (solid blue), (iii) novel scenes (evaluation set; dashed red), and (iv) novel scenes with random $\mathrm{SE}(3)$ transforms (dashed blue). All policies are trained on the training set and tested under all four conditions.}

\added{For the Canonical Policy, as described in Section~\ref{sec:overview}, both the observation and the low-dimensional inputs are mapped to a canonical frame using the estimated transformation; actions are predicted in this frame and then mapped back using the inverse transform. Consequently, samples that differ only by $\mathrm{SE}(3)$ transforms but belong to the same equivariant group share the same canonical representation. Hence, as shown in the last column of Fig.~\ref{fig:vis_se3_plot}, training on seen data and testing on either seen or novel data, with or without random $\mathrm{SE}(3)$ transforms, yields similar success rate curves.}

\added{In contrast, baseline policies that do not enforce $\mathrm{SE}(3)$ equivariance generalize poorly: performance drops on novel scenes and degrades sharply, often to near zero, when random $\mathrm{SE}(3)$ transforms are introduced at evaluation. These findings verify the theoretical motivation in Section~\ref{sec:method}: canonicalization removes pose variability before policy inference, leading to robustness across arbitrary rigid transforms.}

\subsection{Benchmark with Point Cloud Policies}
As discussed in Section~\ref{sec:preprocessing}, we provide two versions of canonical policy: one with translation invariance and full $\mathrm{SO}(3)$ equivariance, referred to as \added{CP-$\mathrm{SO}3$}; and another with translation invariance and in-plane $\mathrm{SO}(2)$ equivariance, referred to as \added{CP-$\mathrm{SO}2$}. We will use these abbreviations throughout the following experimental section.

Table~\ref{tab:comparison} summarizes the performance of various policies across 12 tasks. Specifically, for the 8 MimicGen tasks, we generate 200 demonstrations~\cite{equidiff} and train each method for 250 epochs. For the Push T task, which also contains 200 demonstrations~\cite{diffusion_policy}, each method is trained for 1000 epochs. For the bimanual tasks, each with 50 demonstrations~\cite{Equibot}, training is conducted for 2000 epochs per method.

All experiments are conducted using three random environment seeds: 42, 43, and 44. For each seed, we evaluate the policies over the last 10 epochs and report the average. For MimicGen and Push T, each rollout is conducted across 50 different environment initializations. For the three bimanual tasks, each rollout consists of 10 single trials. Thus, the mean score reported in Table~\ref{tab:comparison} for each task and each method is the average over 30 evaluation results. The best result for each task is shown in bold, and the second-best is underlined.

As shown in Table~\ref{tab:comparison}, both CP-SO3 and CP-SO2 outperform the baseline point cloud policies (DP3, iDP3, and EquiBot) on all 8 MimicGen tasks and the 3D Push T task. Notably, CP-SO2 achieves the highest success rates in 6 out of the 9 single-arm tasks, supporting our hypothesis that $\mathrm{SO}(2)$ equivariance is particularly well-suited for scenarios where the $z$-axis remains consistently upward.
In the three bimanual tasks, all policies perform similarly in Object Covering and Box Closing, achieving success rates close to 100\%. However, for Cloth Folding, CP-SO3 and CP-SO2 achieve the second and third best performances, respectively, following EquiBot.

When averaged across all tasks, CP-SO2 attains the highest overall performance with a success rate of 58.3\%, followed by CP-SO3 at 56.0\%. Compared to the baselines, CP-SO2 outperforms DP3 by 13.4\%, iDP3 by 15.4\%, and EquiBot by 25.1\%.

Interestingly, while EquiBot achieves the best performance across all three bimanual tasks and ranks third on the 3D Push T task, it performs poorly on the MimicGen tasks. We hypothesize that this discrepancy arises from the point cloud encoder used in EquiBot \cite{DP3}, which will further explored in our ablation study on point cloud encoders.

\begin{table*}[htbp]
\caption{Task success rates (\%) across different policies and tasks.}
\label{tab:comparison}
\centering
\renewcommand{\arraystretch}{1.2} 

\newcolumntype{C}{>{\centering\arraybackslash}X}

\begin{tabularx}{\textwidth}{lCCCCCC}
\toprule
 & \textbf{Stack D1} & \textbf{Mug Cleanup D1} & \textbf{Nut Assembly D0} & \textbf{Stack Three D1} & \textbf{Threading D2} & \textbf{Square D2} \\
\midrule
DP3 & $23\pm3$ & $28\pm6$ & $21\pm6$ & $1\pm1$ & $8\pm3$ & $3\pm3$ \\
iDP3 & $23\pm7$ & $25\pm7$ & $9\pm5$ & $0\pm1$ & $6\pm3$ & $1\pm2$ \\
EquiBot & $1\pm1$ & $5\pm3$ & $0\pm0$ & $0\pm0$ & $0\pm1$ & $0\pm1$ \\
\rowcolor{gray!30}
CP-SO3 & \underline{$71\pm5$} & \underline{$37\pm8$} & \underline{$39\pm11$} & \underline{$8\pm3$} & \bm{$17\pm5$} & \bm{$16\pm4$} \\
\rowcolor{gray!30}
CP-SO2 & \bm{$79\pm7$} & \bm{$38\pm8$} & \bm{$42\pm7$} & \bm{$9\pm3$} & \underline{$15\pm4$} & \underline{$15\pm5$} \\
\bottomrule
\end{tabularx}

\vspace{1.5em}

\begin{tabularx}{\textwidth}{lCCCCCC}
\toprule
 & \textbf{Coffee D2} & \textbf{Hammer Cl. D1} & \textbf{Push T} & \textbf{Cloth Folding} & \textbf{Object Covering} & \textbf{Box Closing} \\
\midrule
DP3 & \underline{$43\pm6$} & $54\pm7$ & $71\pm4$ & $88\pm7$ & \bm{$99\pm1$} & \bm{$100\pm0$} \\
iDP3 & $40\pm6$ & $45\pm7$ & $76\pm1$ & $91\pm6$ & \bm{$99\pm1$} & \bm{$100\pm0$} \\
EquiBot & $1\pm1$ & $5\pm2$ & $77\pm4$ & \bm{$98\pm2$} & \bm{$99\pm1$} & \bm{$100\pm0$} \\
\rowcolor{gray!30}
CP-SO3 & $37\pm6$ & \bm{$68\pm6$} & \underline{$84\pm3$} & \underline{$97\pm6$} & \bm{$99\pm1$} & $99\pm6$ \\
\rowcolor{gray!30}
CP-SO2 & \bm{$51\pm6$} & \underline{$67\pm6$} & \bm{$89\pm3$} & $95\pm6$ & \bm{$99\pm1$} & \bm{$100\pm0$} \\
\bottomrule
\end{tabularx}

\vspace{0.3em}
\footnotesize
\parbox{1.0\linewidth}{
We report the mean success rate, computed as the average over the final 10 checkpoints, each evaluated in 50 environment initializations. Results are further averaged over 3 training seeds, yielding 150 evaluations per policy. \textbf{Bold} indicates the best performance; \underline{underline} denotes the second-best. ``CP-SO3'' refers to canonical policy with SO(3) rotation equivariance, and ``CP-SO2'' denotes the variant with SO(2) equivariance. \added{All policies are evaluated on point cloud inputs with only 3D coordinates, without using color information.}
}
\end{table*}

\subsection{Ablation Study}
\added{We conducted comprehensive ablation studies focusing on five aspects: 
(i) the contributions of different modules within the canonical policy, 
(ii) the effects of various point cloud encoders,
(iii) the role of scaling in task performance via comparison of $\mathrm{SE}(3)$ and $\mathrm{SIM}(3)$, 
(iv) the impact of key parameters in the $\mathrm{SO}(3)$-equivariant network on estimation performance, 
and (v) the influence of different action heads.
}
In all experiments, success rates are computed as the mean over the final 10 checkpoints, each evaluated under 50 environment initializations with random seed 42.

As shown in Table~\ref{tab:ab_cp}, the abbreviations “Aggre.”, “Trans.”, “SO2”, and “SO3” represent the Point Cloud Aggregation Encoder Block, translation invariance, SO(2) equivariance, and SO(3) equivariance, respectively. The results show that, compared to the first row which corresponds to the original DP3 setup, incorporating the Aggregation Encoder Block together with the canonical representation effectively facilitates the learning of 3D policies.

In the point cloud encoder ablation study (Table~\ref{tab:ab_encoder}), we compared three encoders: the original MLP-based DP3 encoder, PointNet++ \cite{pointnet++}, and a graph-based encoder DGCNN \cite{DGCNN}. These encoders were evaluated under different policies, including DP3, EquiBot, CP-SO3, and CP-SO2. To explain EquiBot’s limited performance on RoboMimic tasks, we also included its original encoder, the Vector Neuron SO(3)-equivariant encoder. Across all policies, when using an encoder other than the DP3 encoder, the success rate in MimicGen tasks dropped to nearly zero. Similarly, in the Push T task, the success rate remained notably limited.

Interestingly, while EquiBot performs poorly with its original SO(3)-equivariant encoder, replacing it with the DP3 encoder leads to substantial performance improvements. On MimicGen tasks, this change yields significantly higher success rates, and on the Push T task, EquiBot achieves a 90\% success rate, which is the highest among all evaluated policies.

\added{
Furthermore, EquiBot adopts \(\mathrm{SIM}(3)\) equivariance, which extends \(\mathrm{SE}(3)\) equivariance by additionally scaling the entire scene to improve robustness to variations in object size. To evaluate this effect within our canonical policy, we conduct an ablation study comparing policies with and without scaling across four MimicGen tasks, as shown in Table~\ref{tab:SIM3}.
Results are mixed: for both CP-SO3 and CP-SO2, scaling improves performance on two tasks but worsens it on the other two, leading us to adopt $\mathrm{SE}(3)$ canonicalization in the final model.
}

\added{Next, we analyze two key hyperparameters of the \(\mathrm{SO}(3)\)-equivariant network in Fig.~\ref{fig:SO3} that mainly determine the parameter budget of the entire policy: the number of repeat layers and the feature dimension of the equivariant linear layers. 
We evaluate their impact on the Stack D1 task for both CP-SO3 and CP-SO2, with results summarized in Table~\ref{tab:params}. 
Table~\ref{tab:params}(a) reports the proportion of parameters allocated to the \(\mathrm{SO}(3)\)-equivariant branch relative to the full policy (including the \(\mathrm{SO}(3)\)-equivariant branch, the point cloud encoder, and the diffusion action head). 
Two main observations emerge. 
First, the \(\mathrm{SO}(3)\)-equivariant branch is lightweight: under our default configuration (4 repeat layers, 48 feature dimensions) it accounts for only \(\sim\)3.2\% of total parameters. 
Second, feature dimension has a stronger effect on parameter count than repeat depth, yielding a greater monotonic increase as the dimension grows. 
Performance trends in Table~\ref{tab:params}(b) (CP-SO3) and Table~\ref{tab:params}(c) (CP-SO2) further show that success rates improve with larger branch capacity, consistent with the intuition that a higher-capacity equivariant estimator is more robust to noisy inputs. This will be discussed in detail in the next subsection.}

Finally, to examine canonical policy’s compatibility with different action heads, we compared diffusion model and flow matching~\cite{flow_matching} on Stack D1 and Push T. The results are presented in Table~\ref{tab:ab_policy}. It is noteworthy that diffusion and flow matching exhibit task-specific preferences. For instance, in Stack D1, diffusion outperforms flow matching for nearly all policies, whereas in Push T, flow matching leads to better results. Despite these differences, canonical policy consistently outperforms the other baselines under both types of action heads.

\newcolumntype{C}[1]{>{\centering\arraybackslash}p{#1}}
\begin{table*}[htbp]
\caption{Task success rates (\%) in the ablation study across different configurations.}
\label{tab:ab_cp}
\centering
\renewcommand{\arraystretch}{1.2} 
\begin{tabular}{C{0.6cm}C{0.6cm}C{0.6cm}C{0.6cm}C{1.4cm}C{2.2cm}C{2.2cm}C{2.0cm}C{1.4cm}C{1.0cm}}
\toprule
\textbf{Aggre.} & \textbf{Trans.} & \textbf{SO2} & \textbf{SO3} & \textbf{Stack D1} & \textbf{Mug Cleanup D1} & \textbf{Nut Assembly D0} & \textbf{Hammer Cl. D1} & \textbf{Coffee D2} & \textbf{Push T} \\
\midrule
$\times$ & $\times$ & $\times$ & $\times$ & $20\pm2$ & $25\pm6$ & $21\pm1$ & $54\pm6$ & $44\pm5$ & $70\pm2$ \\
$\checkmark$ & $\times$ & $\times$ & $\times$ & $32\pm5$ & $28\pm8$ & $12\pm2$ & $62\pm5$ & $37\pm7$ & $69\pm2$ \\
$\times$ & $\checkmark$ & $\times$ & $\times$ & $43\pm3$ & $35\pm4$ & $34\pm4$ & $59\pm6$ & $49\pm5$ & $86\pm1$ \\
$\checkmark$ & $\checkmark$ & $\times$ & $\times$ & \underline{$75\pm5$} & $34\pm8$ & $14\pm3$ & $66\pm6$ & $38\pm5$ & \bm{$88\pm2$} \\
$\times$ & $\checkmark$ & $\checkmark$ & $\times$ & $57\pm3$ & \underline{$37\pm4$} & $30\pm4$ & \underline{$67\pm6$} & \underline{$51\pm4$} & $84\pm2$ \\
$\times$ & $\checkmark$ & $\times$ & $\checkmark$ & $50\pm5$ & $36\pm6$ & $29\pm2$ & $57\pm8$ & $39\pm5$ & $82\pm3$ \\
\rowcolor{gray!30}
$\checkmark$ & $\checkmark$ & $\times$ & $\checkmark$ & $68\pm4$ & \underline{$37\pm5$} & \bm{$41\pm4$} & \bm{$68\pm5$} & $40\pm7$ & $85\pm2$ \\
\rowcolor{gray!30}
$\checkmark$ & $\checkmark$ & $\checkmark$ & $\times$ & \bm{$76\pm5$} & \bm{$44\pm7$} & \underline{$38\pm6$} & \underline{$67\pm8$} & \bm{$52\pm8$} & \underline{$87\pm3$} \\
\bottomrule
\end{tabular}

\vspace{0.3em}
\footnotesize
\parbox{0.93\linewidth}{
Mean success rates are averaged over the final 10 checkpoints, each evaluated in 50 environment initializations with random seed 42. “Aggre.” refers to the Point Cloud Aggregation Encoder Block, and “Trans.” denotes translation invariance.
}
\end{table*}

\begin{table*}[htbp]
\caption{Task success rates (\%) for different encoders across policies and tasks.}
\label{tab:ab_encoder}
\centering
\renewcommand{\arraystretch}{1.2}
\setlength{\tabcolsep}{3pt}
\newcolumntype{L}[1]{>{\raggedright\arraybackslash}p{#1}} %
\newcolumntype{C}[1]{>{\centering\arraybackslash}p{#1}}    %

\begin{tabular}{L{1.4cm} L{2.0cm} C{1.6cm} C{2.3cm} C{2.3cm} C{2.3cm} C{1.6cm} C{1.6cm}}
\toprule
\multicolumn{2}{c}{} & \textbf{Stack D1} & \textbf{Mug Cleanup D1} & \textbf{Nut Assembly D0} & \textbf{Hammer Cl. D1} & \textbf{Coffee D2} & \textbf{Push T} \\
\midrule

DP3 & DP3 Encoder & $20\pm2$ & $25\pm6$ & $21\pm1$ & $54\pm6$ & \underline{$44\pm5$} & $70\pm2$ \\
    & PointNet++ & $0\pm0$ & $1\pm1$ & $0\pm0$ & $0\pm1$ & $0\pm0$ & $13\pm1$ \\
    & DGCNN & $0\pm0$ & $0\pm0$ & $0\pm0$ & $0\pm0$ & $0\pm0$ & $10\pm0$ \\

\midrule
Equibot & DP3 Encoder & $37\pm5$ & $32\pm9$ & $20\pm3$ & \underline{$58\pm8$} & $40\pm6$ & \bm{$90\pm0$} \\
        & Equibot Encoder & $1\pm1$ & $3\pm3$ & $0\pm0$ & $5\pm3$ & $1\pm1$ & $74\pm3$ \\
        & PointNet++ & $0\pm0$ & $0\pm0$ & $0\pm0$ & $1\pm1$ & $0\pm0$ & $13\pm1$ \\
        & DGCNN & $0\pm1$ & $0\pm0$ & $0\pm0$ & $0\pm0$ & $0\pm0$ & $22\pm1$ \\

\midrule
\rowcolor{gray!30}
CP-SO3 & DP3 Encoder & \underline{$50\pm5$} & \underline{$36\pm6$} & \underline{$29\pm2$} & $57\pm8$ & $39\pm5$ & $82\pm3$ \\
\rowcolor{gray!30}
       & PointNet++ & $0\pm0$ & $0\pm0$ & $0\pm0$ & $1\pm1$ & $0\pm1$ & $28\pm3$ \\
\rowcolor{gray!30}
       & DGCNN & $0\pm0$ & $0\pm0$ & $0\pm0$ & $0\pm1$ & $0\pm0$ & $17\pm1$ \\

\midrule
\rowcolor{gray!30}
CP-SO2 & DP3 Encoder & \bm{$57\pm3$} & \bm{$37\pm4$} & \bm{$30\pm4$} & \bm{$67\pm6$} & \bm{$51\pm4$} & \underline{$84\pm2$} \\
\rowcolor{gray!30}
       & PointNet++ & $0\pm1$ & $0\pm0$ & $0\pm0$ & $3\pm3$ & $1\pm1$ & $32\pm1$ \\
\rowcolor{gray!30}
       & DGCNN & $0\pm0$ & $0\pm0$ & $0\pm0$ & $0\pm0$ & $0\pm0$ & $27\pm1$ \\

\bottomrule
\end{tabular}

\vspace{0.3em}
\footnotesize
\parbox{0.93\linewidth}{
Mean success rates are averaged over the final 10 checkpoints, each evaluated in 50 environment initializations with random seed 42.
}
\end{table*}

\begin{table}[t]
\centering
\caption{\added{Ablation of scaling ($\mathrm{SIM}(3)$) versus no scaling ($\mathrm{SE}(3)$) on canonical policy performance across tasks.}}
\label{tab:SIM3}
\small
\setlength{\tabcolsep}{3.6pt}
\renewcommand{\arraystretch}{1.15}
\begin{tabular}{>{\centering\arraybackslash}p{1.0cm}  
                >{\centering\arraybackslash}p{0.9cm}  
                >{\centering\arraybackslash}p{1.3cm}
                >{\centering\arraybackslash}p{1.3cm}
                >{\centering\arraybackslash}p{1.3cm}
                >{\centering\arraybackslash}p{1.4cm}}
\toprule
 & Scaling &
\textbf{Stack D1} & \textbf{Mug. D1} &
\textbf{Ham. D1} & \textbf{Coffee D2} \\
\midrule
\multirow{2}{*}{CP-SO3}
 & $\times$ & $68\pm4$ & $37\pm5$ & \bm{$68\pm5$} & \bm{$40\pm7$} \\
 & \checkmark  & \bm{$80\pm5$} & \bm{$41\pm5$} & $65\pm6$ & $39\pm3$ \\
\midrule
\multirow{2}{*}{CP-SO2}
 & $\times$ & \bm{$76\pm5$} & $44\pm7$ & $67\pm8$ & \bm{$52\pm8$} \\
 & \checkmark  & $69\pm4$ & \bm{$47\pm7$} & \bm{$71\pm5$} & $47\pm5$ \\
\bottomrule
\end{tabular}
\vspace{0.3em}
\footnotesize
\parbox{0.93\linewidth}{\added{
Mean success rates are averaged over the final 10 checkpoints, each evaluated in 50 environment initializations with random seed 42.}}
\end{table}

\begin{table}[t]
\caption{\added{Ablation on two key hyperparameters of the $\mathrm{SO}(3)$-equivariant network and performance on Stack D1.}}
\label{tab:params}
\small
\setlength{\tabcolsep}{9pt}     
\renewcommand{\arraystretch}{1.18}

\centering
\begin{tabular}{lcccc}          
\toprule
\multirow{2}{*}{\makecell[l]{\textbf{Feature}\\\textbf{Dimensions}}} &
\multicolumn{3}{c}{\textbf{Repeat Layers}} &
\multirow{2}{*}{\textbf{Average}} \\
\cmidrule(lr){2-4}
& \textbf{1} & \textbf{2} & \textbf{4} & \\
\midrule
12 & 0.20\% & 0.21\% & 0.22\% & 0.21\% \\
24 & 0.80\% & 0.82\% & 0.87\% & \underline{0.83}\% \\
48 & 3.11\% & 3.20\% & 3.39\% & \textbf{3.23}\% \\
\bottomrule
\end{tabular}

\medskip
{\footnotesize\emph{\added{(a) Proportion of $\mathrm{SO}(3)$-equivariant branch parameters relative to total policy parameters.}}}

\bigskip

\begin{tabular}{lcccc}
\toprule
\multirow{2}{*}{\makecell[l]{\textbf{Feature}\\\textbf{Dimensions}}} &
\multicolumn{3}{c}{\textbf{Repeat Layers}} &
\multirow{2}{*}{\textbf{Average}} \\
\cmidrule(lr){2-4}
& \textbf{1} & \textbf{2} & \textbf{4} & \\
\midrule
12 & $68\!\pm\!3$ & $62\!\pm\!4$ & $61\!\pm\!4$ & $64\!\pm\!5$ \\
24 & $73\!\pm\!5$ & $63\!\pm\!4$ & \textbf{$77\!\pm\!4$} & \bm{$71\!\pm\!7$} \\
48 & $74\!\pm\!4$ & $70\!\pm\!4$ & $68\!\pm\!4$ & \underline{$70\!\pm\!5$} \\
\bottomrule
\end{tabular}

\medskip
{\footnotesize\emph{\added{(b) Success rate ($\%$) of CP-SO3 with varying branch size.}}}

\bigskip

\centering
\begin{tabular}{lcccc}
\toprule
\multirow{2}{*}{\makecell[l]{\textbf{Feature}\\\textbf{Dimensions}}} &
\multicolumn{3}{c}{\textbf{Repeat Layers}} &
\multirow{2}{*}{\textbf{Average}} \\
\cmidrule(lr){2-4}
& \textbf{1} & \textbf{2} & \textbf{4} & \\
\midrule
12 & $75\!\pm\!4$ & $81\!\pm\!5$ & $70\!\pm\!4$ & $75\!\pm\!6$ \\
24 & $78\!\pm\!4$ & $79\!\pm\!4$ & $81\!\pm\!3$ & \underline{$79\!\pm\!4$} \\
48 & $85\!\pm\!3$ & \textbf{$87\!\pm\!4$} & $76\!\pm\!5$ & \bm{$83\!\pm\!6$} \\
\bottomrule
\end{tabular}

\medskip
{\footnotesize\emph{\added{(c) Success rate ($\%$) of CP-SO2 with varying branch size.}}}
\end{table}

\begin{table}[htbp]
\caption{Comparison of success rates (\%) between Diffusion and Flow Matching on Stack D1 and Push T.}
\label{tab:ab_policy}
\centering
\renewcommand{\arraystretch}{1.2}
\setlength{\tabcolsep}{3pt}
\newcolumntype{L}[1]{>{\raggedright\arraybackslash}p{#1}} %
\newcolumntype{C}[1]{>{\centering\arraybackslash}p{#1}}    %

\begin{tabular}{L{1.3cm} C{1.2cm} C{1.8cm} C{1.2cm} C{1.8cm}}
\toprule
\multirow{2}{*}{} & \multicolumn{2}{c}{\textbf{Stack D1}} & \multicolumn{2}{c}{\textbf{Push T}} \\
\cmidrule(lr){2-3} \cmidrule(lr){4-5}
& Diffusion & Flow Matching & Diffusion & Flow Matching \\
\midrule
DP3 & $20\pm2$ & $17\pm4$ & $70\pm1$ & $83\pm2$ \\
iDP3 & $31\pm5$ & $20\pm5$ & $76\pm2$ & $84\pm1$ \\
Equibot & $1\pm1$ & $1\pm1$ & $74\pm3$ & $82\pm2$ \\
\rowcolor{gray!30}
CP-SO3 & \bm{$76\pm4$} & \bm{$63\pm9$} & \underline{$85\pm2$} & \underline{$85\pm2$} \\
\rowcolor{gray!30}
CP-SO2 & \bm{$76\pm6$} & \underline{$59\pm6$} & \bm{$87\pm3$} & \bm{$97\pm1$} \\
\bottomrule
\end{tabular}

\vspace{0.3em}
\footnotesize
\parbox{0.93\linewidth}{
Mean success rates are averaged over the final 10 checkpoints, each evaluated in 50 environment initializations with random seed 42.
}
\end{table}

\subsection{\added{Reliability Analysis of the Rotation Estimator}}
\label{sec:visual}
\added{In Section~\ref{sec:theory}, we introduced the theoretical foundation of the canonical policy, based on the concept of equivariant groups. Section~\ref{sec:preprocessing} further proves that all elements within an equivariant group can be transformed into a shared canonical representation. However, this definition is highly restrictive: even a tiny deviation, such as a single point difference, may break the rigid-transformation constraint and cause point clouds to fall into different groups, which introduces more variation for the policy. In practice, neural networks exhibit robustness to such noise through parameter optimization.} 

\added{We design an experiment to evaluate the reliability of the $\mathrm{SO}(3)$-equivariant network shown in Fig.~\ref{fig:SO3} under noisy inputs. Specifically, we sample eight point clouds from the Stack D1 task and apply eight augmentations per sample under four noise levels:}
\begin{itemize}
    \item \textbf{Level 0}: random $\mathrm{SO}(3)$ rotation only;
    \item \textbf{Level 1}: random rotation, pointwise Gaussian jitter ($\mu=0,\;\sigma=0.05$), and random dropout, cropping, and insertion, where each of these three operations is applied to 5\% of the total points;
    \item \textbf{Level 2}: same with $\sigma=0.1$ and 10\% perturbations;
    \item \textbf{Level 3}: same with $\sigma=0.2$ and 20\% perturbations.
\end{itemize}

\added{We compare the network outputs under two conditions: (i) \textit{frozen} parameters; and (ii) \textit{trained} parameters. UMAP~\cite{umap} is used to project the resulting equivariant features $\mathbf{F}_{\text{equiv}}$ (shown in Fig.~\ref{fig:SO3}) into 2D for visualization.}

\added{As shown in Fig.~\ref{fig:vis_feat}, with frozen parameters the features remain unchanged during the training process. At Level 0, features from all eight augmentations cluster tightly for both settings, reflecting the network’s inherent $\mathrm{SO}(3)$-equivariance. As noise increases, the frozen network’s features rapidly lose their cluster structure, indicating a lack of robustness. In contrast, the trained network maintains coherent clusters at Level 1 and Level 2, showing improved tolerance to moderate perturbations. However, at Level 3 (strong jitter and 20\% point violations), even the trained network fails to preserve feature consistency.}

\added{Quantitative results in Table~\ref{tab:forzon} confirm these trends: without training, the estimator strictly follows equivariance but cannot handle noisy inputs; with training, it becomes more robust and reliably maps approximately rigid point clouds to a shared canonical representation, thereby improving downstream policy performance.}

\begin{table}[t]
\centering
\caption{\added{Ablation of freezing versus not freezing the \(\mathrm{SO}(3)\)-equivariant network on canonical policy performance across tasks.}}
\label{tab:forzon}
\small
\setlength{\tabcolsep}{3.6pt}
\renewcommand{\arraystretch}{1.15}

\begin{tabular}{>{\centering\arraybackslash}p{1.0cm}  
                >{\centering\arraybackslash}p{0.9cm}  
                >{\centering\arraybackslash}p{1.3cm}
                >{\centering\arraybackslash}p{1.3cm}
                >{\centering\arraybackslash}p{1.3cm}
                >{\centering\arraybackslash}p{1.4cm}}
\toprule
&Frozen&
\textbf{Stack D1} & \textbf{Mug. D1} & \textbf{Ham.\ D1} & \textbf{Coffee D2} \\
\midrule
\multirow{2}{*}{CP-SO3}
 & $\times$ & \bm{$68\pm4$} & \bm{$37\pm5$} & \bm{$68\pm5$} & \bm{$40\pm7$} \\
 & \checkmark  & $18\pm5$ & $17\pm6$ & $31\pm7$ & $3\pm2$ \\
\midrule
\multirow{2}{*}{CP-SO2}
 & $\times$ & \bm{$76\pm5$} & \bm{$44\pm7$} & \bm{$67\pm8$} & \bm{$52\pm8$} \\
 & \checkmark  & $68\pm6$ & $37\pm6$ & $60\pm7$ & $37\pm6$ \\
\bottomrule
\end{tabular}
\vspace{0.3em}
\footnotesize
\parbox{0.93\linewidth}{\added{
Mean success rates are averaged over the final 10 checkpoints, each evaluated in 50 environment initializations with random seed 42.}}
\end{table}

\begin{figure*}[t]
    \centering
    \includegraphics[width=1.0\linewidth]{images/vis_feat.pdf}
    \caption{\added{Visualization of equivariant features produced by the $\mathrm{SO}(3)$-equivariant network under different noise levels and training stages.}}
    \label{fig:vis_feat}
\end{figure*}

\section{Real Robot Experiments}
In this section, we evaluate canonical policy on two real-world platforms: the Franka Emika Panda and UR5. We describe the experimental setup, compare against both image-based and point cloud-based baselines, and assess generalization under appearance and viewpoint shifts, as well as data efficiency in low-data regimes\added{, followed by an analysis of failure case}.

\subsection{Real-World Experimental Setup}
\begin{figure}[t]
    \centering
    \includegraphics[width=1.0\linewidth]{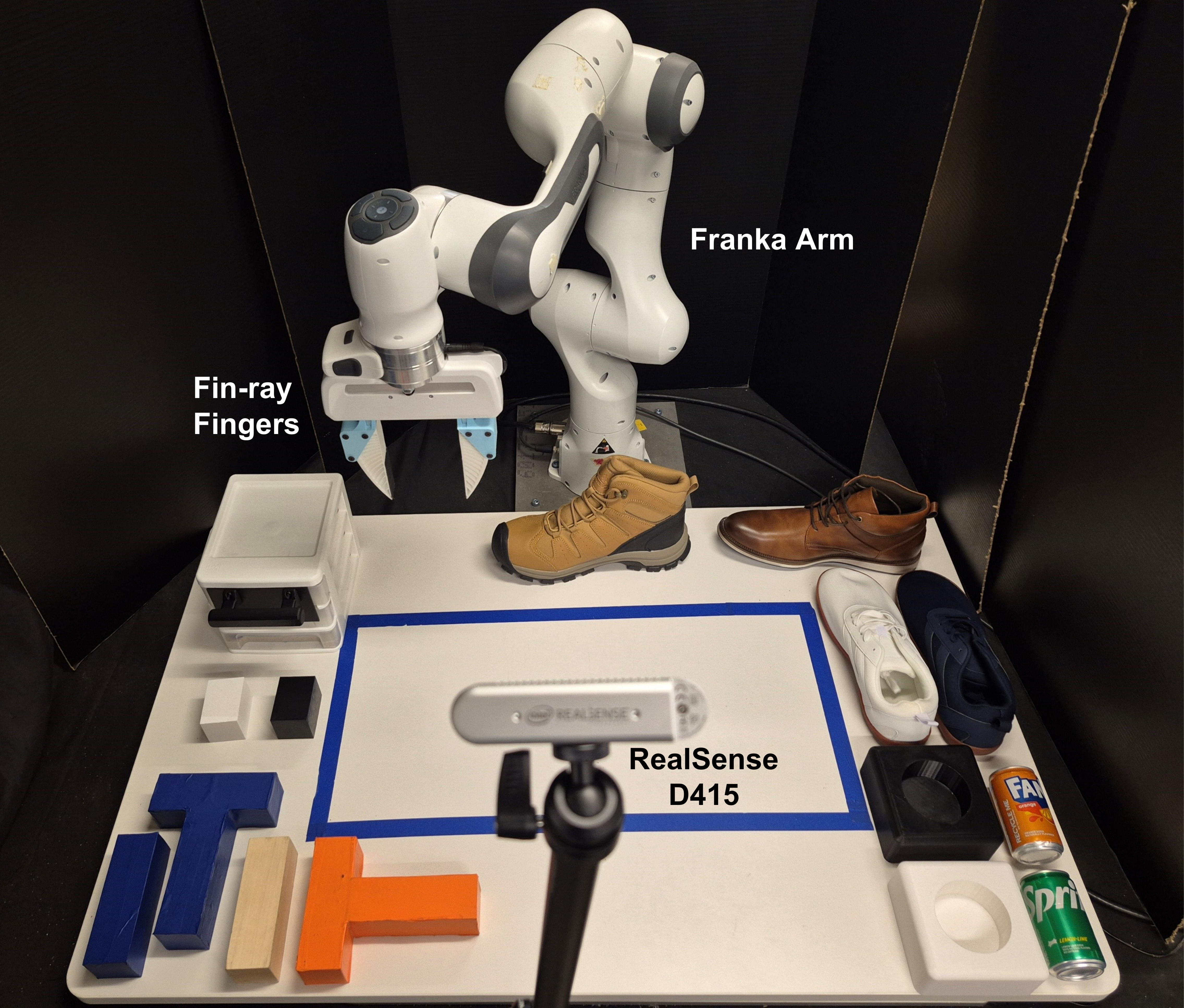}
    \caption{Real-world experimental setup on the Franka platform. A RealSense D415 camera provides visual observations. The objects shown are used across all evaluated tasks.}
    \label{fig:panda_setup}
\end{figure}
\begin{figure*}[t]
    \centering
    \includegraphics[width=1.0\linewidth]{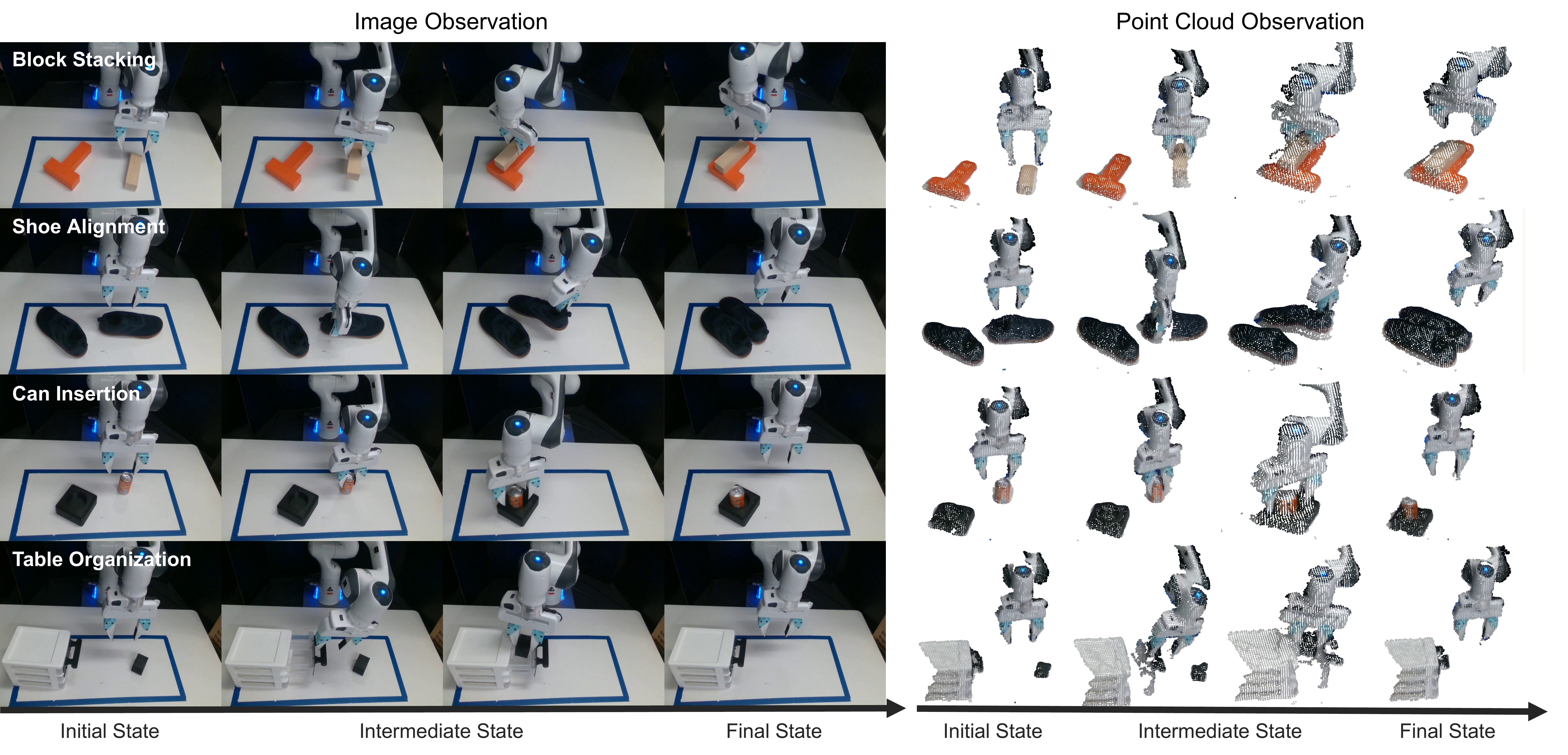}
    \caption{Real-world task executions on the Franka platform. The four tasks, namely Block Stacking, Shoe Alignment, Can Insertion, and Table Organization, are shown in three stages: initial, intermediate, and final. Both RGB image observations (left) and point cloud observations (right) are visualized to illustrate the input modalities used by the policy.}
    \label{fig:panda_tasks}
\end{figure*}

\begin{figure}[t]
    \centering
    \includegraphics[width=1.0\linewidth]{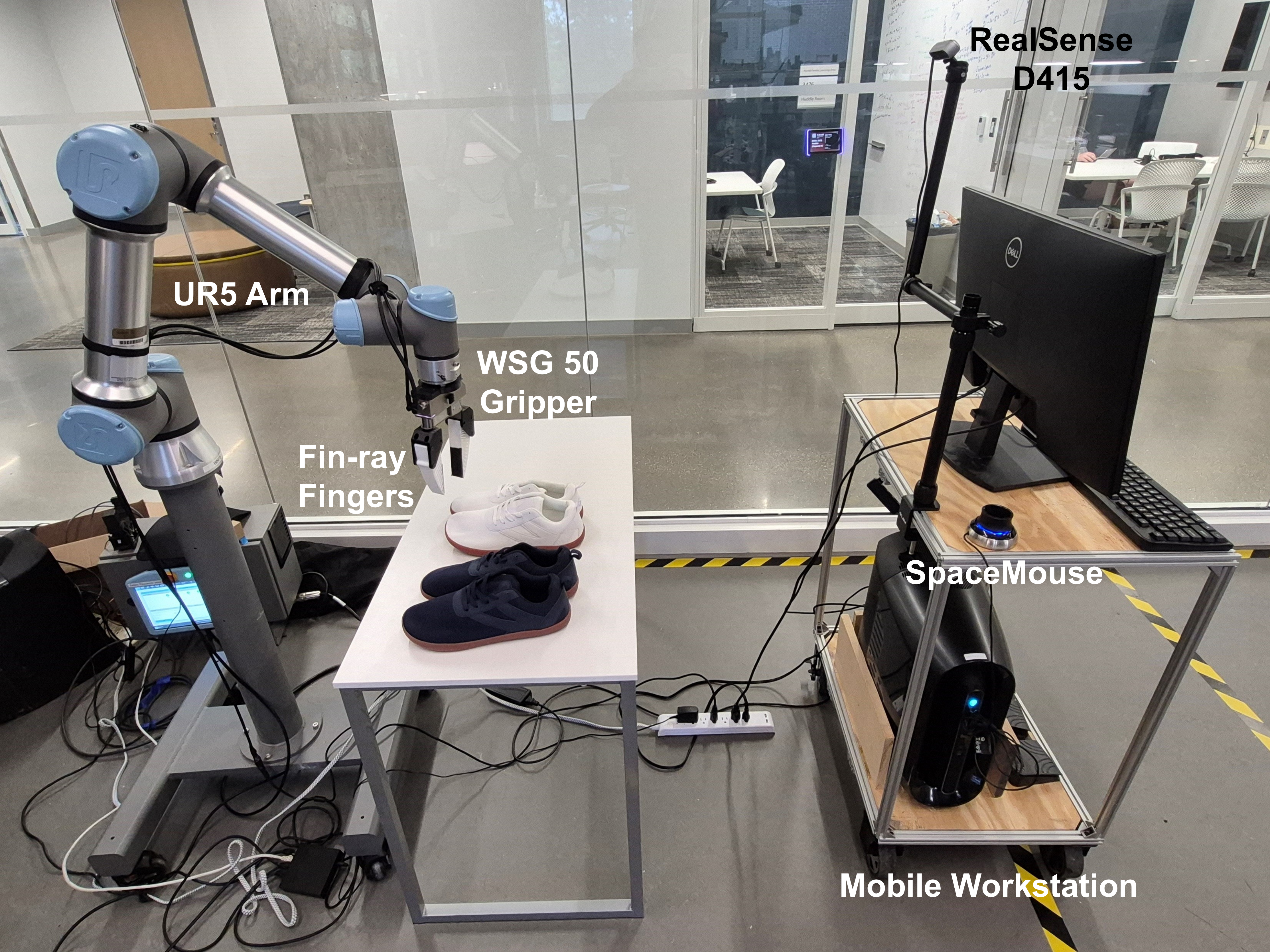}
    \caption{UR5 platform setup for real-world experiments. We employ a mobile workstation to enable flexible camera positioning and use a RealSense D415 for visual perception.}
    \label{fig:UR5_setup}
\end{figure}

We begin by describing the real-world experimental setup for both robotic platforms.

\textbf{Franka Emika Panda:} The first system consists of a Franka Emika Panda robot arm~\cite{panda} using a \textit{relative control} (see Section~\ref{sec:overview}), equipped with a pair of Fin-Ray fingers~\cite{finray} and a fixed Intel RealSense D415~\cite{realsense} RGB-D camera. Human demonstrations are collected via a 6-DoF 3DConnexion SpaceMouse~\cite{spacemouse}, with observations and actions recorded at 10~Hz. During policy rollout, we apply DDIM~\cite{DDIM} sampling with 20 denoising steps for faster inference.

The experimental setup and objects used are shown in Fig.~\ref{fig:panda_setup}. The D415 camera is mounted at a fixed location on the table, providing an angled view of the workspace. All interactions are constrained within the blue boundary illustrated in the figure.
We evaluate four manipulation tasks on this platform (see Fig.~\ref{fig:panda_tasks}): Block Stacking, Shoe Alignment, Can Insertion, and Table Organization. Tasks are arranged in increasing order of difficulty. Block Stacking requires coarse positional and orientational control, while Shoe Alignment involves precise orientation alignment between a pair of shoes. Can Insertion demands accurate localization and vertical insertion of a cylindrical object. Table Organization is the most challenging, involving a multi-step sequence with precise grasping and drawer manipulation.

We visualize the extracted point clouds on the right side of Fig.~\ref{fig:panda_tasks}. RGB-D frames are converted into 3D point clouds using calibrated camera intrinsics, followed by spatial cropping within a bounding box of size $0.95 \times 0.50 \times 0.74~\mathrm{m}^3$. To enhance generalization, only the $\mathit{xyz}$ coordinates are retained, discarding color information. Consistent with the simulation setup, all point clouds are uniformly downsampled to 256 points to reduce memory usage and improve inference efficiency. For each task, we collect 100 human demonstrations with randomized object placements.

To further assess generalization, we introduce distribution shifts by varying object appearance, including changes in shape and color, as illustrated in Fig.~\ref{fig:panda_appearance_change}.

We compare our method with the representative point cloud baseline DP3~\cite{DP3}, and use CP-SO2 as our canonical policy variant due to the upright alignment of the observation $z$-axis. To assess performance across modalities, we additionally evaluate two state-of-the-art image-based baselines: Diffusion Policy (DP)~\cite{diffusion_policy} and the equivariant variant EquiDiff~\cite{equidiff}. Input images are resized from $640 \times 480$ to $256 \times 256$, then center-cropped to $230 \times 230$ before being fed into the encoder.

\textbf{UR5:} To test under different control and sensing conditions, we also conduct experiments on a UR5~\cite{ur5} robot using \textit{absolute control} (see Section~\ref{sec:overview}). As shown in Fig.~\ref{fig:UR5_setup}, the system is equipped with a WSG 50 gripper~\cite{wsg50} for precise width control and Fin-Ray fingers for compliant grasping. A single RealSense D415 is mounted on a mobile workstation to provide RGB-D input. The mobility of the camera facilitates easy variation of viewpoints. Demonstrations are collected via the SpaceMouse. We follow the same point cloud generation and cropping procedure as described earlier.
\added{In addition, we evaluate ACT~\cite{zhao2023learning} as an image-based baseline on the UR5, given that its chunked absolute end-effector outputs are compatible with the UR5 control mode.}

On the UR5 platform, we conduct experiments to evaluate the data efficiency and robustness to camera viewpoint changes of canonical policy, comparing it against both image and point cloud policies\added{, and additionally analyzing failure cases}.

\subsection{Benchmarking Across Diverse Tasks}

\begin{figure}[t]
    \centering
    \includegraphics[width=1.0\linewidth]{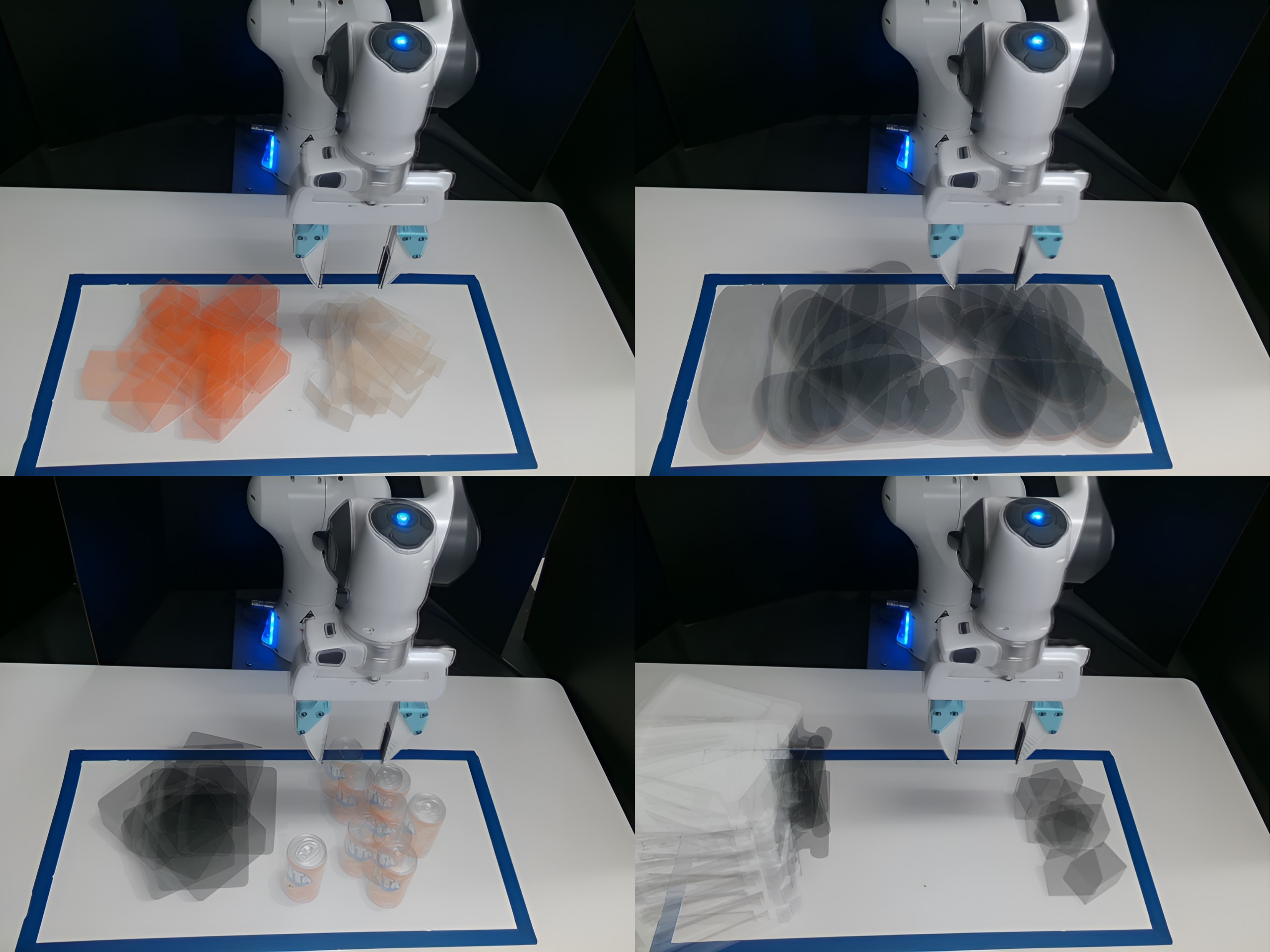}
    \caption{Randomized initial object configurations used to evaluate policy generalization and robustness across tasks.}
    \label{fig:random}
\end{figure}

\begin{table}[t]
\centering
\caption{Success rates over 10 trials for real-world tasks on the Franka Emika platform.}
\label{tab:realworld_normal}
\setlength{\tabcolsep}{4pt}          
\renewcommand{\arraystretch}{1.125}  
\begin{tabular}{l l|c c c c}
\toprule
\textbf{Modality} & \textbf{Policy} &
\makecell{Block \\ Stacking} &
\makecell{Shoe \\ Alignment} &
\makecell{Can \\ Insertion} &
\makecell{Table \\ Organization} \\
\midrule
\multirow{2}{*}{Image}
  & DP        & 2/10 & 1/10 & \textbf{4/10} & 0/10 \\
  & Equidiff  & \underline{3/10} & 3/10 & 3/10 & \underline{2/10} \\
\midrule
\multirow{2}{*}{Point Cloud}
  & DP3       & \underline{3/10} & \underline{5/10} & 1/10 & 0/10 \\
  & \cellcolor{gray!30}CP\mbox{-}SO2
              & \cellcolor{gray!30}\textbf{7/10}
              & \cellcolor{gray!30}\textbf{6/10}
              & \cellcolor{gray!30}\textbf{4/10}
              & \cellcolor{gray!30}\textbf{3/10} \\
\bottomrule
\end{tabular}
\end{table}

Table~\ref{tab:realworld_normal} summarizes the quantitative performance of different policies with various input modalities across four real-world tasks, evaluated over 10 trials on the Franka platform. CP-SO2 consistently achieves the highest success rates compared to other policies as task difficulty increases. For simpler tasks such as Block Stacking and Shoe Alignment, CP-SO2 achieves 70\% and 60\% success rates, outperforming the second-best policies by 40\% and 10\%, respectively. Besides, in these two tasks, policies based on point cloud modalities achieve higher success rates compared to those relying on image modalities.

For the Can Insertion task, DP achieves comparable results to CP-SO2, with both attaining a 40\% success rate, which is 10\% higher than EquiDiff and 30\% higher than DP3. However, for the long-horizon Table Organization task, only EquiDiff and CP-SO2 achieve non-zero success rates, with CP-SO2 reaching 30\%, outperforming EquiDiff by 10\%.

Fig.~\ref{fig:random} illustrates the variability of policy rollouts over 10 trials. We sum the initial state observation of each trial for CP-SO2 and compute the average, which confirms that each trial differs from the others, ensuring sufficient randomness for fair evaluation. Therefore, the quantitative results reported in Table~\ref{tab:realworld_normal} validate the effectiveness of the proposed canonical representation, which outperforms baselines across both point cloud and image modalities in real-world scenarios.

Finally, it is worth noting that the results on the Block Stacking and Table Organization tasks are consistent with their corresponding tasks in MimicGen (Stack D1 and Mug Cleanup D1), indicating strong sim-to-real consistency in sensory observations.

\subsection{Generalization to Unseen Object Appearances}
\begin{figure*}[t]
    \centering
    \includegraphics[width=1.0\linewidth]{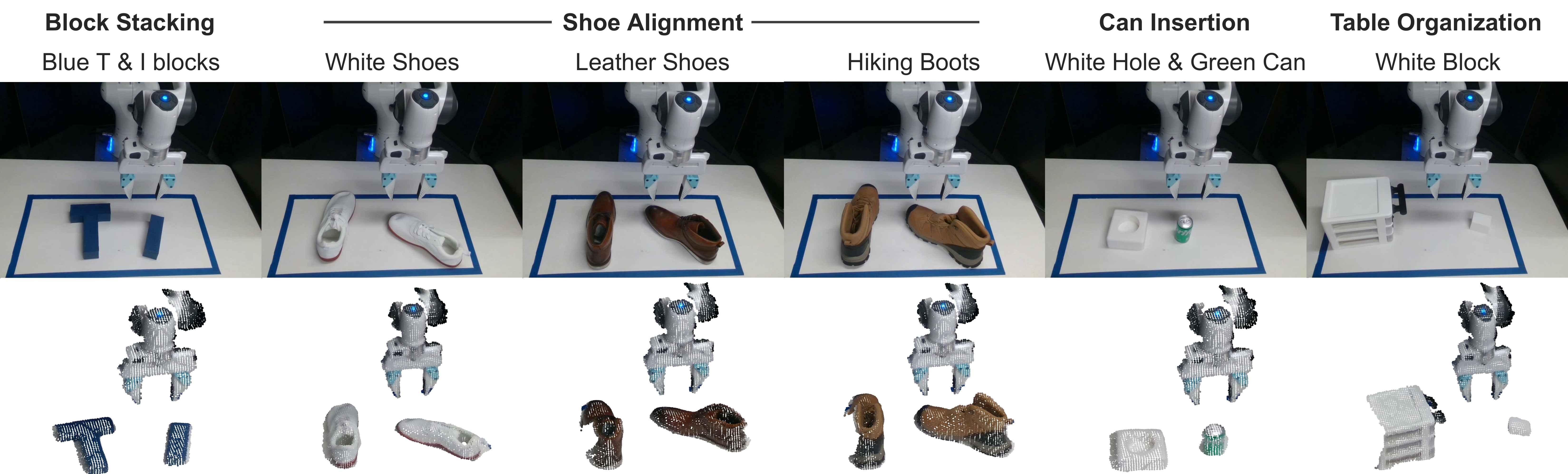}
    \caption{Visualization of appearance variation experiments on the Franka platform.}
    \label{fig:panda_appearance_change}
\end{figure*}

\begin{figure*}[t]
    \centering
    \includegraphics[width=1.0\linewidth]{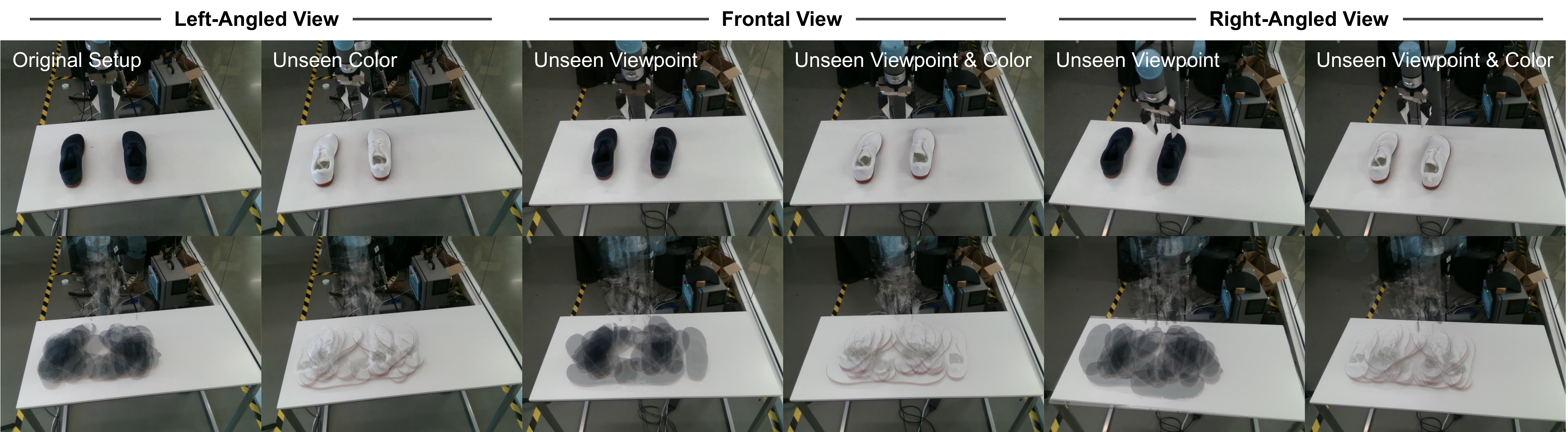}
    \caption{Visualization of the Shoe Alignment task on the UR5 platform under appearance and viewpoint variations.}
    \label{fig:UR5_tasks}
\end{figure*}

\begin{table}[t]
\centering
\caption{Success rates over 10 trials under color variations on the Franka Emika platform.}
\label{tab:color_change}
\setlength{\tabcolsep}{4pt}
\renewcommand{\arraystretch}{1.125} 
\begin{tabular}{l l|c c c c} 
\toprule
\textbf{Modality} & \textbf{Policy} & \makecell{Block \\ Stacking} & \makecell{Shoe \\ Alignment} & \makecell{Can \\ Insertion} & \makecell{Table \\ Organization} \\
\midrule
\multirow{2}{*}{Image} 
    & DP       & 0/10 & 0/10 & 0/10 & 0/10 \\
    & Equidiff & 0/10 & 0/10 & \underline{3/10} & 0/10 \\
\midrule
\multirow{2}{*}{Point Cloud} 
    & DP3      & \underline{3/10} & \underline{3/10} & 1/10 & 0/10 \\
    & \cellcolor{gray!30}CP-SO2 & \cellcolor{gray!30}\textbf{6/10} & \cellcolor{gray!30}\textbf{6/10} & \cellcolor{gray!30}\textbf{4/10} & \cellcolor{gray!30}\textbf{2/10} \\
\bottomrule
\end{tabular}
\end{table}

\begin{table}[t]
\centering
\caption{Success rates over 10 trials under both color and shape variations on the Franka Emika platform.}
\label{tab:shape_change}
\begin{tabular}{ll|cc}
\toprule
Modality & Policy & \textbf{Leather Shoes} & \textbf{Hiking Boots} \\
\midrule
\multirow{2}{*}{Image} 
    & DP        & 0/10 & 0/10 \\
    & Equidiff  & 2/10 & 0/10 \\
\midrule
\multirow{2}{*}{Point Cloud} 
    & DP3       & \underline{3/10} & \textbf{2/10} \\
    & \cellcolor{gray!30}CP-SO2    & \cellcolor{gray!30}\textbf{5/10} & \cellcolor{gray!30}\textbf{2/10} \\
\bottomrule
\end{tabular}
\end{table}

\begin{table}[t]
\centering
\caption{Success rates over 10 trials under varying numbers of demonstrations and shoe colors on the UR5 platform.}
\label{tab:data_efficiency}
\scriptsize   
\setlength{\tabcolsep}{3.6pt} 
\renewcommand{\arraystretch}{1.125} 
\begin{tabular}{ll|cc|cc}
\toprule
\multirow{2}{*}{Modality} & \multirow{2}{*}{Policy} & 
\multicolumn{2}{c}{\textbf{50 Demos}} & 
\multicolumn{2}{c}{\textbf{100 Demos}} \\
\cmidrule(lr{0.4em}){3-4} \cmidrule(lr{0.4em}){5-6}
& & Black Shoes & White Shoes & Black Shoes & White Shoes \\
\midrule
\multirow{3}{*}{Image}
    & \added{ACT}       & \added{0/10} & \added{0/10} & \added{1/10} & \added{1/10} \\
    & DP        & 0/10 & 0/10 & 0/10 & 0/10 \\
    & Equidiff  & 0/10 & 0/10 & 0/10 & 0/10 \\
\midrule
\multirow{2}{*}{Point Cloud} 
    & DP3       & 0/10 & 0/10 & \underline{3/10} & \underline{2/10} \\
    & \cellcolor{gray!20}CP-SO2 & \cellcolor{gray!20}\textbf{4/10} & \cellcolor{gray!20}\textbf{6/10} & \cellcolor{gray!20}\textbf{6/10} & \cellcolor{gray!20}\textbf{7/10} \\
\bottomrule
\end{tabular}
\end{table}

\begin{table*}[t]
\centering
\caption{Success rates over 10 trials under varying camera viewpoints and shoe colors on the UR5 platform.}
\label{tab:robustness_eval}
\begin{tabular}{ll|cccccc}
\toprule
\multirow{2}{*}{Modality} & \multirow{2}{*}{Policy} & 
\multicolumn{2}{c}{\textbf{Left-Angled View}} & 
\multicolumn{2}{c}{\textbf{Frontal View}} & 
\multicolumn{2}{c}{\textbf{Right-Angled View}} \\
\cmidrule(lr{0.4em}){3-4} \cmidrule(lr{0.4em}){5-6} \cmidrule(lr{0.4em}){7-8}
& & Black Shoes & White Shoes & Black Shoes & White Shoes & Black Shoes & White Shoes \\
\midrule
\multirow{3}{*}{Image}
    & \added{ACT}       & \added{0/10} & \added{0/10} & \added{0/10} & \added{0/10} & \added{0/10} & \added{0/10} \\
    & DP       & 0/10 & 0/10 & 0/10 & 0/10 & 0/10 & 0/10 \\
    & Equidiff & 0/10 & 0/10 & 0/10 & 0/10 & 0/10 & 0/10 \\
\midrule
\multirow{2}{*}{Point Cloud} 
    & DP3     & 0/10 & 0/10 & 0/10 & 0/10 & 0/10 & 0/10 \\
    & \cellcolor{gray!20}CP-SO2 & 
    \cellcolor{gray!20}\textbf{4/10} & \cellcolor{gray!20}\textbf{6/10} & 
    \cellcolor{gray!20}\textbf{4/10} & \cellcolor{gray!20}\textbf{6/10} & 
    \cellcolor{gray!20}\textbf{3/10} & \cellcolor{gray!20}\textbf{3/10} \\
\bottomrule
\end{tabular}
\end{table*}

\added{In this section, to thoroughly evaluate the generalizability of canonical policy operating on point cloud inputs, we modify each task by altering object colors and shapes.}
The same trained policies for each task are then used to perform rollouts in the modified environments. The appearance changes are illustrated in Fig.~\ref{fig:panda_appearance_change}.

Specifically, in the Block Stacking task, both the I-shaped and T-shaped blocks are recolored to blue, representing a significant color shift from their original appearance. In the Shoe Alignment task, the shoes used during training are replaced with white shoes. Furthermore, two additional shoe types: leather shoes and hiking boots are introduced, differing substantially in both color and shape from the original shoes. In the Can Insertion task, the original black holder and orange can are replaced with a white holder and green can, respectively. Finally, in the Table Organization task, only the color of the small block is changed from black to white.
\added{Each new scenario is evaluated over 10 trials, with randomness controlled in the same manner as in Fig.~\ref{fig:random}.}

The experimental results under color variations are summarized in Table~\ref{tab:color_change}. Since point cloud-based policies are trained without color information, their performance under appearance changes is significantly more robust compared to image-based policies, as evidenced by the comparison with Table~\ref{tab:realworld_normal}. Notably, CP-SO2 consistently achieves the highest success rates across all tasks, outperforming the second-best policy by 30\%, 30\%, 10\%, and 20\% for tasks of increasing difficulty. Interestingly, Equidiff achieves the second-best performance in the Can Insertion task, outperforming DP3 by 20\%, but shows zero success rates in other tasks, similar to DP. This phenomenon indicates that while Equidiff exhibits some degree of generalization, it remains inferior to canonical policy, as CP-SO2 maintains stable and high performance across all tasks under appearance changes.

Table~\ref{tab:shape_change} further assesses the generalization capability of each policy under simultaneous changes in both color and shape. CP-SO2 consistently achieves the highest success rates, outperforming the second-best policy by 20\% in the Leather Shoe Alignment task. In the Hiking Boot Alignment task, DP3 achieves performance comparable to CP-SO2.
This observation can be explained by the fact that, in the original training set, the shoes were low-top, leading the end-effector to learn grasping motions at relatively low heights. When encountering high-top hiking boots, the end-effector often follows a lower approach trajectory and collides with the boot during the grasping attempt, significantly increasing the difficulty of successful grasping.
Under such challenging conditions, the performance of point cloud-based policies remains similar.

\subsection{\added{Data Efficiency Analysis}}
In addition to evaluating policy performance under nominal conditions, we assess the generalization capability of canonical policy in real-world settings with limited data and varying observation conditions. Specifically, we study two challenging scenarios on the UR5 platform: (i) data efficiency under varying numbers of demonstrations, and (ii) robustness to camera viewpoint changes \added{(discussed in the next subsection)}.

These settings simulate realistic deployment challenges where data collection may be constrained, and sensing configurations may vary over time. We show that CP-SO2, which leverages SO(2)-equivariance and point cloud inputs, consistently generalizes better than both image-based and alternative point cloud baselines.

We select the Shoe Alignment task, using two visually distinct objects: a pair of black shoes and a pair of white shoes. All human demonstrations and training data are collected using the black shoes, while the trained policy is evaluated through rollouts on both black and white shoes.

Table~\ref{tab:data_efficiency} summarizes the results under varying numbers of training demonstrations, comparing each method using 50 and 100 human demonstrations collected with black shoes. Each configuration is evaluated over 10 rollout trials. When trained with only 50 demonstrations, CP-SO2 is the only method that achieves non-zero success rates, reaching 40\% on black shoes and 60\% on white shoes. In contrast, all other baselines, including the image-based policies \added{ACT}, DP and Equidiff as well as the point cloud-based baseline DP3, fail completely under this low-data setting.

As the number of demonstrations increases to 100, CP-SO2's performance further improves to 60\% and 70\% on black and white shoes, respectively, outperforming the second-best method DP3 by 30\% and 50\%.
Notably, while CP-SO2 and DP3 exhibit clear improvements with additional training data, the performance of image-based policies remains \added{essentially unchanged}. 

As shown in Tables~\ref{tab:realworld_normal} and \ref{tab:data_efficiency}, all baselines (DP, DP3, and EquiDiff) perform worse on the UR5 compared to the Franka Panda. We attribute this to differences in the low-level control setups. In the Franka Panda experiments, we utilized an Operational Space Controller, which helped smooth out the jerky actions produced by the learned policies. In contrast, the UR5 setup relied on high-precision pose tracking without such smoothing. As illustrated in our supplementary video, under a single-camera setup and with only 50 or 100 demonstrations, baseline policies frequently produced unstable motions on the UR5, leading to degraded task performance. These results further \added{highlight} the consistent and robust performance of canonical policy across different robotic platforms.

An interesting observation is that CP-SO2 achieves higher success rates on white shoes, despite being trained exclusively on black shoes, underscoring the strong generalization capability of canonical policy. From a point cloud perspective, black and white shoes share the same geometric structure, enabling a policy trained on black shoes to naturally generalize to white ones. Nevertheless, the point clouds of white shoes introduce subtle variations compared to black shoes in the training set. We believe that this slight randomness in observations may actually enhance policy robustness, a phenomenon frequently observed in imitation learning, potentially explaining the improved performance on white shoes. Introducing controlled randomness into observations is a common strategy to improve robustness of policies, as seen in techniques such as random image cropping in Diffusion Policy \cite{diffusion_policy} and color jitter or random rotations in UMI training \cite{chi2024universal}.

\subsection{\added{Robustness to Viewpoint Shifts and Failure Case Analysis}}
\added{
To evaluate the robustness of canonical policy under camera viewpoint changes, we fix the number of training demonstrations at 50 and rotate the mobile workstation to simulate different viewpoints. As shown in Fig.~\ref{fig:UR5_tasks}, the leftmost two columns (Left-Angled View) correspond to the original training setup. The middle columns (Frontal View) simulate a moderate $15^\circ$ rotation, while the rightmost columns (Right-Angled View) represent a larger rotation of approximately $30^\circ$, which induces a stronger deviation from the training viewpoint.
}

\added{A change in camera viewpoint is theoretically equivalent to an inverse transformation of the scene while keeping the camera fixed. This experimental setup thus enables us to validate the theoretical foundation of our canonical representation. As the mobile workstation performs rigid-body translations and in-plane rotations, the $z$-axis remains consistently aligned upright, ensuring the applicability of CP-SO2, which is designed to be $\mathrm{SO}(2)$-equivariant about the vertical axis.}

\added{Table~\ref{tab:robustness_eval} summarizes the success rates under different viewpoint shifts. When the viewpoint is shifted to the Frontal View, CP-SO2 maintains strong performance with success rates of 40\% and 60\% for black and white shoes, respectively. In contrast, all baseline policies fail completely under the same conditions. When the camera is further rotated to the Right-Angled View, the performance of CP-SO2 drops to 30\% for both shoe types.}

\added{
This trend mirrors the findings in Section~\ref{sec:visual} on the noise robustness of the $\mathrm{SO}(3)$-equivariant network. Moderate viewpoint shifts (Frontal View) resemble the low-to-moderate noise levels (Levels 1–2 in Fig.~\ref{fig:vis_feat}), where parameter optimization enables the network to effectively map new observations back to the canonical frame of the original setup. Consequently, the canonicalization remains consistent and the downstream policy achieves comparable performance.}

\added{
However, larger viewpoint shifts (Right-Angled View) introduce more severe disruptions: significant occlusion, previously unseen regions, and larger changes in visible geometry, which are similar to the high-noise condition (Level 3 in Fig.~\ref{fig:vis_feat}). Under such conditions, the network can no longer reliably align these altered point clouds to a single canonical space. As a result, observations that should be mapped to approximately the same canonical representation are instead projected into inconsistent frames. This increased input diversity leads to degraded policy performance.
}

\section{Conclusions and Limitations}
In this paper, we present canonical policy, a principled framework for 3D equivariant imitation learning that unifies point cloud observations through a canonical representation. Built upon a rigorous theory of 3D canonical mappings, our method enables end-to-end learning of spatially equivariant policies from demonstrations. By leveraging geometric consistency through canonicalization and the expressiveness of generative policy models, canonical policy improves generalization and data efficiency in imitation learning.

We validate our approach on 12 diverse simulated tasks and 4 real-world manipulation tasks, covering 16 evaluation settings involving distribution shifts in object appearance, shape, camera viewpoint, and robot platform. Canonical policy consistently outperforms existing image and point cloud baselines, demonstrating superior generalization capability and data efficiency.

Despite its strong empirical performance, canonical policy has several limitations. First, both the Point Cloud Aggregation Encoder Block and the SO(3)-equivariant networks used in our framework rely on neighborhood processing, which can be computationally expensive and time-consuming. This overhead may limit scalability in resource-constrained applications.

In addition, while SO(3)-equivariant networks are theoretically robust to global rotations, they remain sensitive to the specific structure of the input point cloud. Due to occlusions and viewpoint-dependent sampling, different views of the same object may result in substantially different point cloud geometries. As a result, the equivariant features produced from these inputs may differ significantly, which undermines the ability of canonical policy to generalize across large viewpoint shifts. Addressing this limitation may require further advances in viewpoint-invariant or view-consistent point cloud encoders, or the integration of multi-view representations.

\bibliographystyle{IEEEtran}
\bibliography{references}

\vfill

\end{document}